\newtheorem{theorem}{Theorem}
\newtheorem{lemma}{Lemma}
\newtheorem{corollary}[theorem]{Corollary}
\newtheorem{definition}{Definition}
\newtheorem{assumption}{Assumption}
\DeclareMathOperator*{\argmin}{arg\,min\,}
\newcommand{\Me}{M_{e^+,\mathcal{E}_\mathrm{train}}} 
\newcommand{\E}{\mathbb{E}}
\newcommand{\y}{y}
\providecommand{\R}{\ensuremath{\mathbb{R}}}
\def\ry{{\textnormal{y}}}
\def\rvx{{\mathbf{x}}}
\def\rvv{{\mathbf{v}}}
\def\rvy{{\mathbf{y}}}
\def\rvz{{\mathbf{z}}}
\def\rve{{\mathbf{e}}}
\def\rvu{{\mathbf{u}}}
\def\vu{{\bm{u}}}
\def\vv{{\bm{v}}}
\def\vx{{\bm{x}}}
\def\vy{{\bm{y}}}
\def\vz{{\bm{z}}}
\def\re{{\textnormal{e}}}
\title{From Invariant Representations to Invariant Data: Provable Robustness to Spurious Correlations via Noisy Counterfactual Matching}
\shorttitle{From Invariant Representations to Invariant Data}
\author{Ruqi Bai $^1$ \quad Yao Ji $^2$ \quad Zeyu Zhou $^1$ \quad David I. Inouye $^1$\thanks{Corresponding author: David I. Inouye (dinouye@purdue.edu)}\\
  $^1$ Purdue University \,\, $^2$ Georgia Institute of Technology\\
  \texttt{\{bai116,zhou1059,dinouye\}@purdue.edu}, \texttt{yji300@gatech.edu} \\
}
\date{}
\begin{document}
\maketitle

\begin{abstract}
Models that learn spurious correlations from training data often fail when deployed in new environments. While many methods aim to learn invariant representations to address this, they often underperform standard empirical risk minimization (ERM). We propose a data-centric alternative that shifts the focus from learning invariant representations to leveraging invariant data pairs---pairs of samples that should have the same prediction. We prove that certain counterfactuals naturally satisfy this invariance property. Based on this, we introduce Noisy Counterfactual Matching (NCM), a simple constraint-based method that improves robustness by leveraging even a small number of \emph{noisy} counterfactual pairs---improving upon prior works that do not explicitly consider noise. For linear causal models, we prove that NCM's test-domain error is bounded by its in-domain error plus a term dependent on the counterfactuals' quality and diversity. Experiments on synthetic data validate our theory, and we demonstrate NCM's effectiveness on real-world datasets.
\end{abstract}

\section{Introduction}

Spurious correlations are misleading patterns in the training data. The relationships between features and the target do not hold across domains or environments. Models trained on such correlations may perform well on their training distribution yet fail to generalize once the environment changes, because the correlations reflect coincidental or confounded associations rather than true causal links. Addressing spurious correlations is therefore critical for building models that remain reliable under distribution shift—especially in high‑stakes domains such as healthcare, finance, and public services. Spurious correlation falls under the broader problem of domain generalization (DG), which seeks to generalize to new unseen test domains beyond the original training domains.

Invariant representation learning tackles the DG problem by forcing some distributional property of the representation to be stable across domains \citep{peters2016causal,li2018deep,li2018domain,arjovsky2019invariant}. Approaches range from matching the marginal $p(h(\rvx))$ or conditional $p(h(\rvx)|\ry)$ \citep{li2018deep}, to causality‑inspired objectives such as Invariant Causal Prediction (ICP) \citep{peters2016causal} and Invariant Risk Minimization (IRM) \citep{arjovsky2019invariant}, which match the conditional $p(\ry|h(\rvx))$. 
Both MatchDG \citep{mahajan2021domain} and Domain Invariant Representation Learning with
Domain Transformations (DIRT) \citep{nguyen2021domain} consider a two stage approach to invariant representation learning.
In the first stage, they estimate a mapping between domain distributions; MatchDG uses iterative contrastive learning to find data pairings between domains while DIRT used StarGAN \cite{choi2018stargan} to learn an explicit map between domains.
In the second stage, they add an invariant representation regularization term that encourages the latent representations of pairs to be close.
Although theoretically grounded, these methods often underperform empirical risk minimization (ERM) on modern benchmarks \citep{gulrajani2021in,koh2021wilds,bai2024benchmarking}, which may be due to the strong assumptions that do not necessarily hold in practice. 

Inspired by the recent DG works that leverage additional data beyond the standard DG setup \cite{feNIPS2011_b571ecea}, we consider the following research question: 
\textbf{Can shifting the focus from learning invariant \emph{representations} to leveraging invariant \emph{data} provide a more direct and practical path toward domain generalization?}
\Cref{fig:illistrate-matching} illustrates the core intuition of why invariant pairs could be useful for robustness.
This can be viewed as a \emph{data-centric viewpoint} of MatchDG and DIRT that focuses on estimating a robust classifier \emph{given} data pairs between domains (corresponding to stage two of MatchDG and DIRT).
While at first glance it may seem that collecting such invariant pairs would be infeasible, we suggest that they could be reasonably acquired in practice under certain scenarios.
For example, when the spurious correlations are artifacts of a measurement process (e.g., x-ray machine, microscope, staining methodology, etc.), then an invariant pair could be collected by measuring the same specimen under two different environments (e.g., send the same patient to two x-ray machines).
Second, when a domain expert can identify spurious features, they can directly edit spurious features of the sample.
An example of this would be using image editing software including AI-based image editing to change the background of an image while keeping the subject the same (e.g., putting a cow on a boat or a fish in a desert).

\begin{wrapfigure}{r}{0.49\textwidth}
\vspace{-1.5em}
    \centering
    \includegraphics[width=\linewidth]{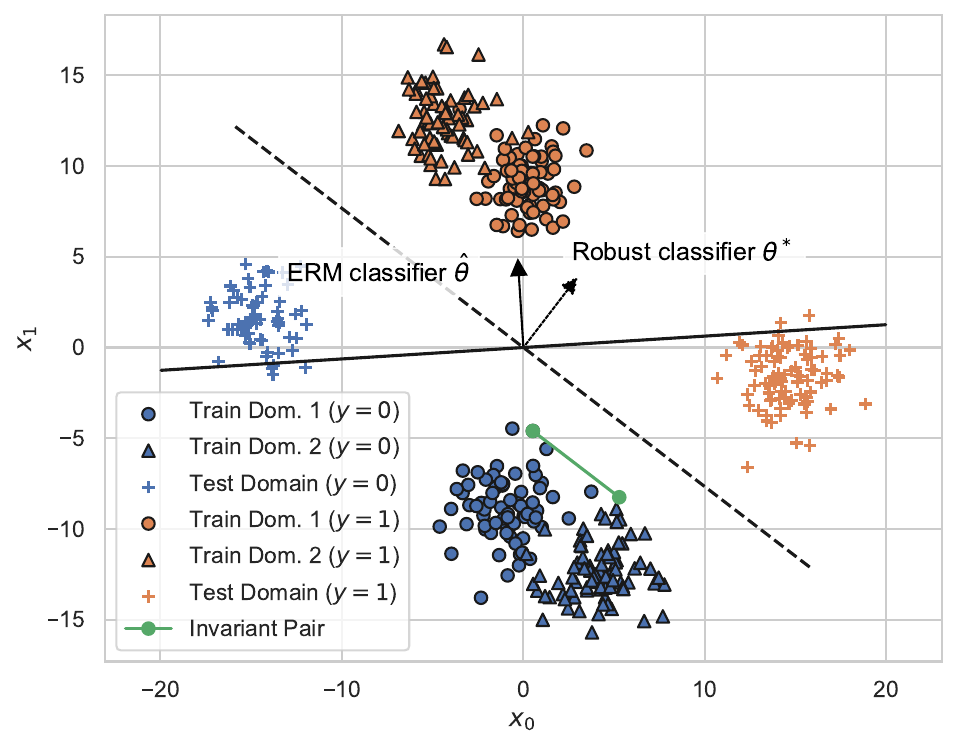}
    \vspace{-2.3em}
    \caption{
    While ERM $\hat{\theta}$ on the training domains (circles and triangles) is not robust to the change in spurious feature in the unseen test domain (pluses), a robust linear classifier $\theta^*$ can be estimated by making the classifier orthogonal to the difference between a \emph{single} invariant pair (green line).
    The color represents label $\ry$.}
    \label{fig:illistrate-matching}
    \vspace{-1.5em}
\end{wrapfigure}

While these are some natural ways to collect such pairs, the focus of our paper is to theoretically and empirically analyze whether invariant pair data \emph{could} be helpful for robustness---i.e., this is a future-looking paper.
Indeed, the current lack of invariant pair datasets should not lead to the conclusion that invariant pairs could not be collected, but rather that the utility of such pairs is not obvious.
We hypothesize that invariant data pairs could enable a data-driven way to \emph{implicitly} encode knowledge of spurious correlations instead of requiring explicit specification (e.g., specifying a causal graphical model).
To explain via an analogy, collecting invariant pairs could be to spurious correlations as collecting class labels is to classification.
In both cases, explicitly defining the target object (either spurious correlations or class) can be very challenging if not impossible but implicitly defining them through data is significantly easier.

The natural next question is: How can these invariant pairs be used for robustness?
MatchDG \citep{mahajan2021domain} and DIRT \citep{nguyen2021domain} propose a simple regularization that encourages the latent representations of pairs to be close.
However, these works do not address two questions critical for a data-centric viewpoint: \emph{How do we theoretically and practically handle \textbf{noise} in the invariant data pairs? How many pairs are needed for robustness theoretically and empirically?}
The first addresses the inevitable noise incurred when collecting or creating invariant data pairs, i.e., they are not perfect pairs. The second addresses the practical question of whether it may be cost-effective to collect such pairs, which may be costly to obtain (though not necessarily).

To address these data-centric questions, we analyze the theoretical guarantees and trade-offs of using invariant pairs (pairs of inputs that should have the same prediction), focusing on the linear setting. 
We formalize the spurious correlation setting using a causal perspective, proving that certain spurious counterfactuals naturally create these invariant pairs.
Based on this, we introduce Noisy Counterfactual Matching (NCM), a simple method that adds a linear constraint to ERM. This constraint, derived from the SVD of the differences between pairs, forces the model to ignore the spurious features identified by the pairs. We prove that NCM is robust to spurious correlations even with a small number of noisy pairs, and we validate our findings empirically.
We summarize our main contributions as:
\begin{enumerate}[leftmargin=*]
    \item We introduce Noisy Counterfactual Matching, a simple, data-centric method that adds a constraint to ERM to improve robustness to spurious correlation using a small set of noisy invariant pairs. 
    \item We theoretically analyze NCM's robustness by proving an out-of-domain error bound that decomposes into the in-domain risk and a term dependent on the quality of provided pairs.
    \item We show that the number of pairs needed can scale linearly with the spurious feature dimension.
\item We empirically validate our theory, demonstrating improved robustness on synthetic data and on real-world benchmarks via linear probing on a pretrained CLIP model.
\end{enumerate}

\paragraph{Notation:} We use lowercase letters to denote random variable (e.g., $\ry$), bold lowercase letters to denote random variables (e.g., $\rvx$), and bold italic letters for their realizations (e.g., $\vx$). Capital letters represent matrices or constants (e.g., $U$, $N$), while calligraphic letters denote sets, domains, or ranges (e.g., $\mathcal{M}$, $\mathcal{E}$, $\mathcal{X}$). The notation $[N]$ represents the index set ${1, 2, \dots, N}$. We denote the $r$-th largest singular value by $\sigma_r$, and $Q_r$ denotes the $r$-largest singular values' corresponding singular vectors. $\|A\|_\Lambda\coloneqq\|A^\top\Lambda^{1/2}\|$ is Mahalanobis-induced spectral norm.
\paragraph{Paper Structure}
In \Cref{sec:setup}, we introduce the problem setup and causal background for this paper. Then we introduce our methods in \Cref{sec:method} and analyze the theoretical guarantee in \Cref{sec:main}. Then, we empirically validate our NCM method in \Cref{sec:empirical} and compare our method with related works in \Cref{sec:related-works}.

\section{Problem Setup}\label{sec:setup}
To formalize the goal of achieving robustness to spurious correlations, we consider a set of domains $\mathcal{E}$ that differ primarily in their spurious features. 
Our objective is to learn an optimal classifier that generalizes across these domains using the available training data. 
Unlike the standard domain generalization (DG) setup \citep{blanchard2011generalizing}, 
we assume access not only to the training data but also to a small set of \emph{invariant data pairs}. Although our ultimate goal is the same as in DG, i.e., achieving strong performance on unseen test domains, our data requirements differ. 
DG typically relies on labeled datasets collected from multiple environments, whereas our setting requires only labeled training data and a small collection of invariant data pairs, as defined below. 

In the following three subsections, we introduce the key definitions underlying our setup and address the following questions:
\begin{enumerate}[leftmargin=*]
    \item How do we define the set of environments? 
    \item What constitutes a spurious correlation? 
    \item What are invariant pairs, formally?
\end{enumerate}

\subsection{Causality Preliminaries}
To formally define the set of environments, we introduce some related concepts in causality. 
In summary, we consider that each domain (or environment\footnote{We will use domain and environment interchangeably.}) corresponds to a distinct structural causal model (SCM) \citep[Definition 7.1.1]{pearl2009causality}, the differences of the SCMs are equivalent to interventions, and counterfactuals are based on applying two different SCMs to the same exogenous noise.
First, we formally define an SCM.
\begin{definition}[Structural Causal Model {\citep[Definition 7.1.1]{pearl2009causality}}]
An SCM $\mathcal{M}$ is represented by a 3-tuple $\langle \mathcal{U},\mathcal{V}, \mathcal{F} \rangle$, where $\mathcal{U}$ is the set of exogenous noise variables, $\mathcal{V}$ is a set of causal variables, and $\mathcal{F} \coloneqq\{f_1,f_2,\dots,f_m\}$ denotes the set of causal mechanisms for each causal variable in $\mathcal{Z}$ given its corresponding exogenous noise and parents, i.e., $\rvv_i = f_i(\rvu_i, \rvv_{\mathrm{Pa}(i)})$. 
\end{definition}
We denote causal mechanism in domain $e$ as $\mathcal{F}_e\coloneqq\{f_{e,1},f_{e,2},\dots,f_{e,m}\}.$ Given this, we consider two notions when comparing two different causal models: intervention set and counterfactuals.
\begin{definition}[Intervention Set]\label{def:intervention}
    Given two SCMs $\mathcal{M}$ and $\mathcal{M}'$ defined on the same set of exogenous noise and causal variables, the intervention set is defined only in terms of their causal mechanisms $\mathcal{F}$ and $\mathcal{F}'$ respectively $\mathcal{I}(\mathcal{F}_e, \mathcal{F}_{e'}) = \{ i : f_{e,i} \neq f_{e',i} \} \,.$

\end{definition}
Note that this definition allows multiple types of intervention including soft, hard or do-style interventions.
We now define counterfactual pairs as applying two SCMs to the same exogenous noise based on the original definition of counterfactuals in SCMs \citep[Definition 7.1.5]{pearl2009causality}.
\begin{definition}[Counterfactual Pair]\label{def:oracle-counterfactual}
    A pair of causal variable realizations $(\vv_\mathcal{A}, \vv_\mathcal{A}')$ where $\mathcal{A} \subseteq \mathcal{V}$ is a subset of causal variables is a \emph{counterfactual pair} between two SCMs $\mathcal{M}$ and $\mathcal{M}'$ (with the same set of exogenous noise variables and causal variables) if and only if there exists a exogenous noise realization $\vu$ such that $\vv_\mathcal{A}$ is the solution to $\mathcal{M}$ and $\vv'_\mathcal{A}$ is the solution to $\mathcal{M}'$.
\end{definition}
Note that this is different than \emph{estimating} counterfactuals given some factual evidence, which would require the three steps of abduction, action, and prediction.
Rather, here we simply define the theoretic notion of a CF pair between two SCMs.
However, in practice, we expect that perfect CF pairs will not be feasible so we focus on providing theoretic analysis of noisy CF pairs.

\subsection{Latent Spurious Correlations}
After introducing the causal preliminaries, we now formalize the latent spurious correlations by specifying the collection of SCMs that define domains. 
This follows many latent SCM multi-domain works \citep{liu2022identifying, zhang2023identifiability, kugelgen2023nonparametric, zhou2024towards}.
\begin{definition}[Class of Latent Domain SCMs] \label{def:latent_causal_model}
Letting $\mathcal{E}$ denote the set of domains, a latent domain SCM class is a set of latent SCMs $\mathcal{M}_\mathcal{E} = \{\mathcal{M}_e\}_{e\in\mathcal{E}}$ such that:
\begin{enumerate}[leftmargin=*,noitemsep, topsep=0pt]
    \item The causal models share the same set of exogenous noise variables, causal variables, and exogenous noise distribution $\mathbb{P}_{\mathcal{U}}$.
    \item The causal variables $\mathcal{V}$ are split into observed  variables $\mathcal{X}\cup\mathcal{Y}$ and latent variables $\mathcal{Z}$.
    \item The models share the same causal mechanisms for the observed variables, denoted by $g_\rvx$ and $g_\ry$, and can only have latent variables in $\mathcal{Z}$ as parents.
\end{enumerate} 
    The latent causal mechanisms for the $i$-th variable in $\mathcal{Z}$ for the $e$-th domain will be denoted as $f_{e,i}$, and the induced distribution over the observed random variables for each domain will be denoted by $\mathbb{P}_e(\rvx,\ry)$. The intervention set among the SCM class is defined as $\mathcal{I}(\mathcal{F_E})\coloneqq\bigcup_{e, e'\in\mathcal{E}} \mathcal{I}(\mathcal{F}_e,\mathcal{F}_{e'}) \,$.
\end{definition}

We now give our primary spurious correlation assumption that the domains in the class can only intervene on spurious latent variables with respect to the target variable $\ry$, i.e., non-ancestors of $\ry$.

\begin{assumption}[Spurious Correlation Latent SCM Class]\label{ass:spurious-corr}
Any variable in the intervention set  must be non-ancestors of $\ry$, i.e., $\mathcal{I}(\mathcal{F_E}) \cap \mathrm{Anc}(\ry) = \emptyset.$ Equivalently, all domains must share the mechanisms for ancestors of $\ry$, i.e., $f_{e,i}=f_{e',i}, \forall i \in \mathrm{Anc}(\ry), e,e' \in \mathcal{E}$.
\end{assumption}
This assumption defines the scope of our work to spurious correlation, which limits the types of shift that we could see at test time to only spurious features, i.e., non-ancestors of $\ry$.
However, this assumption does not limit the strength of these shifts. Intuitively, if the sample $\rvx$ encodes information about a descendant of $\ry$, a predictor trained on $\rvx$ cannot be invariant across interventional distributions because $\rvx$ is a collider of all the latent causal variables and thus $\re$ and $\ry$ will not be d-separated. This is true even when interventions only target spurious features. We include an illustration of causal DAG \Cref{fig:model-illustration} and a detailed explanation in \Cref{app-sec:scm}.

\subsection{Invariant Pairs are Spurious Counterfactual Pairs}
Finally, we define and characterize the invariant pairs under optimally robust classifier.
\begin{definition}[Optimally Robust Classifier]
    \label{def:optimally-robust-classifier}
    Given a set of environments $\mathcal{E}$, the optimally robust classifier is defined as:
    \begin{align*}
        h_\mathcal{E}^* := \argmin_h \max_{e \in \mathcal{E}} \E_{(\rvx,\ry) \sim \mathbb{P}_{e}}[\ell(h(\rvx),\ry)] \,,
    \end{align*}
    where the optimization is over all possible predictive functions $h$.
\end{definition}

\begin{definition}[Invariant Pair]
\label{def:invariant-pair}
    Given a set of environments $\mathcal{E}$, a pair of distinct inputs $(\vx, \vx')$ with $\vx \neq \vx'$ is an \emph{invariant data pair} if and only if the predictions under the optimally robust classifier are equal, i.e., $h_\mathcal{E}^*(\vx) = h_\mathcal{E}^*(\vx')$, where $h_\mathcal{E}^*$ is defined as in \Cref{def:optimally-robust-classifier}.
\end{definition}
Intuitively, invariant data pairs are inputs that should have the same prediction under a robust model. For example, in medical diagnosis, X-rays from two different machines of the same patient should yield the same probabilities, 
without requiring knowledge of the patient's actual diagnosis.

Given the causal model setup , we can now prove that counterfactuals within a spurious correlation latent SCM class are invariant pairs w.r.t. the corresponding domain distributions.
\begin{restatable}[Spurious Counterfactuals are Invariant Pairs]{proposition}{propinvpair}
    \label{thm:spurious-counterfactuals-are-invariant-pairs}
    Given a spurious correlation latent SCM class $\mathcal{M}_\mathcal{E}$ and a strictly convex loss function $\ell$, any observed counterfactual pair $(\vx_e, \vx_{e'})$ between $\mathcal{M}_e \in \mathcal{M}_\mathcal{E}$ and $\mathcal{M}_{e'} \in \mathcal{M}_\mathcal{E}$ will be an invariant pair w.r.t. the optimally robust classifier $h^*_{\mathcal{E}}$ based on $\ell$ induced by the domain distributions $\{\mathbb{P}_e\}_{e\in\mathcal{E}}$ almost surely, i.e., $h_\mathcal{E}^*(\vx_e) = h_\mathcal{E}^*(\vx_{e'})$.
\end{restatable}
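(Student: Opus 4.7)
The plan is to exploit the counterfactual structure to establish that the two elements of the pair correspond to the same label, and then to use a symmetrization argument based on strict convexity of $\ell$ to conclude that the optimally robust classifier must assign them equal values almost surely.

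First, I would unpack the CF definition. By \Cref{def:oracle-counterfactual}, there exists an exogenous noise realization $\vu$ with $\vx_e = g_\rvx(\vz_e(\vu))$ and $\vx_{e'} = g_\rvx(\vz_{e'}(\vu))$, where $\vz_e(\vu), \vz_{e'}(\vu)$ are the SCM solutions under the common noise. Applying \Cref{ass:spurious-corr}, every latent mechanism indexed by $i \in \mathrm{Anc}(\ry)$ agrees across $\mathcal{M}_e$ and $\mathcal{M}_{e'}$, so $\vz_{e, \mathrm{Pa}(\ry)}(\vu) = \vz_{e', \mathrm{Pa}(\ry)}(\vu)$ and the induced labels coincide: $y^*(\vu) := g_\ry(\vz_{e, \mathrm{Pa}(\ry)}(\vu)) = g_\ry(\vz_{e', \mathrm{Pa}(\ry)}(\vu))$. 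Thus the CF pair carries a single shared target, which is the lever I will pull against strict convexity.

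Next, I would argue by contradiction. Suppose the set $A := \{\vu : h^*_\mathcal{E}(\vx_e(\vu)) \neq h^*_\mathcal{E}(\vx_{e'}(\vu))\}$ has positive $p_\vu$-measure. By strong duality applied to the minimax defining $h^*_\mathcal{E}$, there exists $q^* \in \Delta(\mathcal{E})$ with $h^*_\mathcal{E} = \argmin_h \E_{q^*}[\ell(h(\rvx), \ry)]$. Construct $\tilde h$ that coincides with $h^*_\mathcal{E}$ outside the CF image and on the image of $A$ sets $\tilde h(\vx_e(\vu)) = \tilde h(\vx_{e'}(\vu))$ to the convex combination of $h^*_\mathcal{E}(\vx_e(\vu))$ and $h^*_\mathcal{E}(\vx_{e'}(\vu))$ that minimizes the $q^*$-weighted pointwise loss against $y^*(\vu)$. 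Because the two CF elements share one target and $\ell$ is strictly convex, the pointwise loss is strictly reduced on $A$; integrating yields $\E_{q^*}[\ell(\tilde h(\rvx), \ry)] < \E_{q^*}[\ell(h^*_\mathcal{E}(\rvx), \ry)]$, contradicting the $q^*$-optimality of $h^*_\mathcal{E}$. Uniqueness of the Bayes-type minimizer under strict convexity then promotes this into the almost-sure equality $h^*_\mathcal{E}(\vx_e) = h^*_\mathcal{E}(\vx_{e'})$.

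The main obstacle is twofold. First, the maps $\vu \mapsto \vx_e(\vu)$ and $\vu \mapsto \vx_{e'}(\vu)$ need not be injective, so turning the noise-indexed symmetrization into a bona fide function $\tilde h$ of $\vx$ alone requires a measure-theoretic quotient construction (or an identifiability condition on $g_\rvx$) to ensure consistency on fibers. Second, editing $h^*_\mathcal{E}$ on the CF image simultaneously perturbs $R_{e''}$ for every $e''$ whose marginal on $\rvx$ places mass there, not only $e$ and $e'$; the symmetrization must therefore be coordinated so that the full $q^*$-mixture risk decreases, rather than only the sum $R_e + R_{e'}$. Handling this coupling---either by choosing the convex combination pointwise to dominate each $e''$'s contribution, or by iterating the construction across all pairs in $\mathcal{E}^2$ and exploiting transitivity of the CF-equivalence classes---is the technical crux of the argument.
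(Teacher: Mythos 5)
Your overall strategy differs from the paper's. The paper first proves a lemma that any optimally robust predictor must be \emph{constant with respect to the non-ancestors of $\ry$} almost everywhere: writing $k(\vu_x,\vz_1,\vz_2)=h(g_\rvx(\vu_x,\vz_1,\vz_2))$, for each fixed $(\vu_x,\vz_1)$ the inner maximum over environments lets the adversary concentrate the conditional law of the spurious latents on whatever value makes the predictor worst, so minimizing $\sup_{\vz_2'}\phi_{\vu_x,\vz_1}(k(\vu_x,\vz_1,\vz_2'))$ forces $k$ to equal the constant $c_{\vu_x,\vz_1}=\argmin_\alpha \E_{\rvu_y}[\ell(\alpha,g_\ry(\vu_y,\vz_1))]$, with uniqueness from strict convexity. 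The proposition is then immediate because a spurious counterfactual pair shares $(\vu_x,\vz_1)$ and differs only in the non-ancestor latents. Your opening paragraph (shared exogenous noise $\Rightarrow$ shared ancestor latents and shared target) matches the paper's setup, but your core argument replaces this direct adversarial analysis by a duality-plus-symmetrization scheme, and that is where the gaps lie.

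Two steps do not go through as written. First, the claim that ``by strong duality there exists $q^*\in\Delta(\mathcal{E})$ with $h^*_\mathcal{E}=\argmin_h\E_{q^*}[\ell(h(\rvx),\ry)]$'' asserts saddle-point existence for an infinite-dimensional minimization over all measurable predictors against a maximization over a possibly infinite, non-convex family of environment distributions; nothing in the setup guarantees this (Sion-type minimax theorems need convexity and compactness that are not available here), and without a saddle point the minimax optimizer need not be a Bayes solution for any mixture, so the contradiction with ``$q^*$-optimality'' has no target. Second, even granting $q^*$, your symmetrization edits $h^*_\mathcal{E}$ at points of the input space that may be hit by other fibers $(e'',\vu'')$ carrying different targets $y^*(\vu'')$; choosing the convex combination to beat only the $e$- and $e'$-contributions against the single target $y^*(\vu)$ does not guarantee that the full $q^*$-mixture risk strictly decreases, and the non-injectivity of $\vu\mapsto\vx_e(\vu)$ means the edited object may not even be well defined as a function of $\vx$. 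You explicitly flag both issues as ``the technical crux,'' but they are precisely what must be resolved for the contradiction to close, so the proof is incomplete as it stands. The paper's route avoids both problems entirely because it never modifies a candidate predictor on the observation space: it works in the latent parameterization and lets the adversary's point-mass strategy force constancy in the spurious coordinates pointwise.
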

See proof in \Cref{app-sec:connect-pairs}. This elegantly connects spurious counterfactuals and invariant pairs (though again we note that invariant pairs could be defined for other perspectives).
The natural next question is: \emph{Is it possible to collect such pairs in reality?}
We argue that while perfect counterfactual pairs are not possible, noisy or approximate counterfactual pairs could be reasonably simple to collect in certain scenarios (see \Cref{app-sec:availability-pair})

\section{Handling Noise in Invariant Pairs via Noisy Counterfactual Matching}\label{sec:method}
We now turn to the problem of estimating a robust classifier given invariant pairs. While MatchDG provided a method for estimating the classifier given invariant pairs (i.e., MatchDG's stage 2), MatchDG did not explicitly consider the noise that is likely inevitable in invariant data pairs, i.e., the invariant pairs may not have the exact same prediction under the optimally robust classifier. Thus, we propose a way to handle noisy invariant pairs. We believe handling noise is critical for any practical application of our data-centric approach.

\paragraph{Noiseless Approach}
Given the invariant pairs as input, MatchDG \citep{mahajan2021domain} in Stage 2 reads
\begin{equation}
  \min_{h} \E_{(\rvx,\ry) \sim \mathbb{P}_\textnormal{train}}[\ell(h(\rvx;\theta),\ry)],\quad\text{s.t.}\;\; \text{dist}(h(\vx_{e_j};\theta),h(\vx_{e_j\rightarrow e'_j};\theta))=0\;\;\forall j.
\label{eqn:objective-few-shot-simple}
\end{equation}
The idea is to force the invariant pairs have identical latent representations while minimizing the ERM loss. \citet[Theorem 1]{mahajan2021domain} show the above objective find optimal robust classifier if infinite noiseless pairs are available. \footnote{In the paper, the representation function they choose is at the output layer, so we drop the classifier head and focus on matching the outputs of the model directly.} However, as described, invariant pairs are inherently noisy and hard to get. We propose our method that could handle finite pairs as well as noisy pairs.

\paragraph{Noisy Counterfactual Matching (NCM).}
Given a set of noisy counterfactual pairs from the training domains 
$\{(\vx_{e_j}, \vx_{e'_j})\}_{j=1}^k$, these pairs are not invariant due to noise 
(\Cref{thm:spurious-counterfactuals-are-invariant-pairs}), i.e., 
$h_\mathcal{E}^*(\vx_{e_j}) \neq h_\mathcal{E}^*(\vx_{e'_j})$.
We define the noisy counterfactual difference matrix:
\begin{equation}
\tilde{\Delta}_x \coloneqq 
    \begin{bmatrix}
         \vx_{e_1} -  \vx_{e'_1},~
         \dots,~
         \vx_{e_k} - \vx_{e'_k}  
    \end{bmatrix}
    \in\mathbb{R}^{d\times k}.
\end{equation}

When the pairs are noisy, $\tilde{\Delta}_x$ typically has full rank $\min\{k,d\}$. 
Directly enforcing the classifier to be orthogonal to all these noisy directions 
(as in \Cref{eqn:objective-few-shot-simple}) can be pathological: 
if $k > |\mathcal{I}_\mathcal{F_E}|$, the constraint forces $\theta$ to be orthogonal 
to a larger subspace than the true spurious feature space, leading to degraded accuracy.

To mitigate this, we propose NCM:
\begin{equation}
  \min_{\theta} \E_{(\rvx,\ry) \sim \mathbb{P}_\textnormal{train}}[\ell(h(\rvx;\theta),\ry)]
  \quad \text{s.t. } \theta^\top \tilde{Q}_r=0,
\label{eqn:estimated-NCM}
\end{equation}
where $\tilde{Q}_r \in \mathbb{R}^{d \times r}$ consists of the top-$r$ left singular vectors 
from the SVD of $\tilde{\Delta}_x$. 
The truncation limits the orthogonality constraint to the most significant noisy directions, 
preventing over-restriction of the classifier.

With perfect counterfactuals, $\tilde{Q}_r$ (without truncation) exactly spans the spurious subspace, recovering the intuition behind MatchDG. However, MatchDG regularizes via a Frobenius-norm penalty that treats all directions equally, thereby also penalizing invariant features. Like NCM, one could alternatively use a hard constraint formulation instead of the penalty form;
however, in that case, the classifier would collapse to a trivial constant solution. In contrast, NCM enforces a hard constraint only on the top $r$ singular directions, thereby avoiding penalties on invariant dimensions.
This makes NCM more selective: it eliminates dominant spurious effects while preserving invariant predictive structure,
and thus maintains robustness through this selective hard constraint.

There are many efficient algorithms including reparameterization approach, projected gradient descent to directly solve the constrained problem \eqref{eqn:estimated-NCM}, we refer the reader to \Cref{alg:NCM} in \Cref{app-sec:algorithm} for a detailed implementation.

\section{Theoretic Guarantees of NCM for Linear Models} \label{sec:main} In this section, we provide theoretic guarantees of NCM \eqref{eqn:estimated-NCM} for both linear regression and logistic regression. 
Our study proceeds through four steps: (1) we decompose the test error into in-domain error and spurious subspace misalignment (\Cref{thm:l2_loss}); (2) we quantify this misalignment using Wedin's $\sin \Theta$ theorem~\citep{wedin1972perturbation} (\Cref{thm:log_loss-cor}); (3) we characterize the out-of-domain risk under ERM (\Cref{remark:ERM}); and (4) we show that oracle CF pairs in \eqref{eqn:estimated-NCM} recover the optimal robust classifier (\Cref{thm:optimal}).

We consider both logistic regression and linear regression  tasks, where the data generation processes are linear in both cases, differing only in the target variable $\ry$. In logistic regression, $g_\ry(\rvz)$ is a sign function composed with a linear function, while in linear regression, $g_\ry(\rvz)$ is a linear function. Given the data generating process, it is natural to consider the linear regressor $ h_\mathcal{E} $ parameterized by $\theta$ to predict $\ry$ from $\rvx$, and the optimally robust classifier $ h_\mathcal{E}^* $ parameterized by $ \theta^* $.

To quantify the deviation of a test domain $e^+ \in \mathcal{E}_\textnormal{test}$ from the training domain $\mathcal{E}_\textnormal{train}$, we introduce a conceptual random variable defined as  $\rvx_{e^{+}\rightarrow e} \coloneqq g_{\rvx}(f_{e} (\rvu)),$
where $\rvu$ is the same random variable shared by $\rvx_{e^+}$, as $\rvx_{e^+} = g_{\rvx}(f_{e^+} (\rvu))$. Therefore, for each realization of $\rvu$, the corresponding realization of $(\rvx_{e^+}, \rvx_{e^{+}\rightarrow e})$, denoted as $(\vx_{e^+}, \vx_{e^{+}\rightarrow e})$, forms an oracle CF pair. Note that the conceptual random variable $\rvx_{e^{+}\rightarrow e}$ is used only for analysis and does not need to be observed in the training data. 
Given that the exogenous noise follows the distribution $\mathbb{P}_\mathcal{U}$ (cf. \Cref{def:latent_causal_model}), $\rvx_{e^{+}\rightarrow e}$ follows the training domain distribution $p(\rvx_e)$. Building upon it, we introduce population second moment matrix and its SVD as follows  $ \Me \coloneqq \sum_{e \in \mathcal{E}_\mathrm{train}} \mathbb{P}(e) \E_\vu[(\rvx_{e^{+}} -\rvx_{e^+\rightarrow e})(\rvx_{e^{+}} -\rvx_{e^+\rightarrow e})^{\top}]=Q_{|\mathcal{I}(\mathcal{F_E})|} \Lambda_{|\mathcal{I}(\mathcal{F_E})|} Q_{|\mathcal{I}(\mathcal{F_E})|}^{\top},$ where $Q_{|\mathcal{I}(\mathcal{F_E})|}$  is the relevant spurious subspace for the latent SCM class $\mathcal{M_E}$ where $\mathcal{E}$ contains the training domains and the $e^+$ test domain,  and $\mathbb{P}(e)$ is the marginal distribution of environments in the training distribution. Note that the singular values greater than $|\mathcal{I}(\mathcal{F_E})|$ are zero due to our spurious correlation assumption (\Cref{ass:spurious-corr}).
We have the following guarantee on NCM \eqref{eqn:estimated-NCM}. 
\begin{theorem}[Test-Domain Error Bound for NCM with Linear Models] 
\label{thm:l2_loss} 
\label{thm:test-domain-error-ncm} 
Assuming that the environments are defined by a class of linear spurious correlation latent SCMs $\mathcal{M_E}$ (\Cref{ass:spurious-corr}), 
the test-domain risk of any $\theta$ that satisfies the NCM constraint \eqref{eqn:estimated-NCM} for any test domain $\mathcal{M}_{e^{+}}\in \mathcal{M}_{\mathcal{E}}$ is bounded as follows.
\begin{itemize}[leftmargin=1.35em]
     \item[a)] For logistic regression with log loss $\ell_\mathrm{LL}$, the following bound holds: 
     \begin{align*}
\E_{(\rvx_{e^+},\ry_{e^+}) \sim \mathbb{P}_\textnormal{test}}\left[\ell_\mathrm{LL}(\theta^\top \rvx_{e^{+}},\ry_{e^+})\right] 
\leq \underbrace{\E_{(\rvx,\ry) \sim \mathbb{P}_\textnormal{train}} [\ell_\mathrm{LL}(\theta^\top \rvx,\ry)]}_{\text{Term I: In-domain error}}
\underbrace{+\,\|\theta\|\big\|\tilde{Q}_{r,\perp}^\top Q_{|\mathcal{I}(\mathcal{F_E})|}\big\|_{{\Lambda_{|\mathcal{I}(\mathcal{F_E})|}}}.}_{\text{Term II: Spurious subspace misalignment}}
 \end{align*}
     \item[b)]  Similarly, for linear regression with squared error loss $\ell_\mathrm{SE}$, the following holds:
\begin{align*}
&\E_{(\rvx_{e^+},\ry_{e^+}) \sim \mathbb{P}_\textnormal{test}} [\ell_\mathrm{SE}(\theta^{\top}\rvx_{e^+},\ry_{e^+})]
\leq 2\underbrace{\E_{(\rvx,\ry) \sim \mathbb{P}_\textnormal{train}} [\ell_\mathrm{SE}(\theta^{\top}\rvx,\ry)]}_{\text{Term I: In-domain error}}\underbrace{+2\|\theta\|^2\left\|\tilde{Q}_{r,\perp}^\top Q_{|\mathcal{I}(\mathcal{F_E})|}\right\|^2_{{\Lambda_{|\mathcal{I}(\mathcal{F_E})| }}}.}_{\text{Term II: Spurious subspace misalignment}}
\end{align*}
\end{itemize}
Furthermore, Term II in (a) and (b) can be bounded using the following:
  \begin{align}\label{matrix}
&\left\|\tilde{Q}_{r,\perp}^\top Q_{|\mathcal{I}(\mathcal{F_E})|}\right\|^2_{{\Lambda_{|\mathcal{I}(\mathcal{F_E})|}}}\leq \lambda_1(e^{+})\mathrm{dist}^2(\tilde{Q}_{s}, {Q}_{s})+\lambda_{s+1}(e^{+}),
   \end{align}
   where  $s\coloneqq\min\{r, |\mathcal{I}(\mathcal{F_E})|\}$ is the minimum of the user-specified $r$ and the dimension of the relevant spurious feature subspace, $\mathrm{dist}^2(Q,Q') \coloneq \|QQ^\top - Q'{Q'}^\top\|^2$ denotes the squared distance between subspaces \citep{chen2021spectral}, and $\lambda_1(e^{+}) $ and $ \lambda_{s+1}(e^{+})$ denote the largest and $(s+1)$-th largest eigenvalue of $\Me$ respectively.
\end{theorem}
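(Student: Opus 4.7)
The plan is to prove (a), (b), and the bound \eqref{matrix} in sequence, reducing (a) and (b) to controlling a single quantity $\theta^\top \Me \theta$ via the NCM constraint. I would start by leveraging the conceptual counterfactual $\rvx_{e^+\to e}=g_\rvx(f_e(\vu))$ that shares the exogenous noise $\vu$ with $\rvx_{e^+}$. Because \Cref{ass:spurious-corr} forces the mechanisms of every ancestor of $\ry$ to agree across environments, $\ry_{e^+}=\ry_e$ almost surely for every $e\in\mathcal{E}_\mathrm{train}$, so the training expectation rewrites as $\E_{(\rvx,\ry)\sim \mathbb{P}_\textnormal{train}}[\ell(\theta^\top\rvx,\ry)] = \sum_{e} \mathbb{P}(e)\,\E_\vu[\ell(\theta^\top \rvx_{e^+\to e}, \ry_{e^+})]$. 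Subtracting this from the test-domain expectation leaves the remainder $\sum_e \mathbb{P}(e)\,\E_\vu[\ell(\theta^\top\rvx_{e^+},\ry_{e^+}) - \ell(\theta^\top\rvx_{e^+\to e},\ry_{e^+})]$, which will be the only quantity to bound.

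For (a), since $\ell_\mathrm{LL}(\cdot,y)$ is $1$-Lipschitz in its first argument, this remainder is at most $\sum_e \mathbb{P}(e)\,\E_\vu|\theta^\top(\rvx_{e^+}-\rvx_{e^+\to e})|$; two successive applications of Jensen's inequality (moving the expectation inside the square root, then the $e$-sum inside) yield $\sqrt{\theta^\top \Me \theta}$. For (b), I would expand $(\theta^\top\rvx_{e^+}-\ry_{e^+})^2 \leq 2(\theta^\top\rvx_{e^+\to e}-\ry_{e^+})^2 + 2(\theta^\top(\rvx_{e^+}-\rvx_{e^+\to e}))^2$, which produces the $2$ factor on both terms and gives $2\,\theta^\top \Me\theta$ directly in expectation. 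The NCM constraint $\theta^\top \tilde{Q}_r = 0$ now lets us write $\theta = \tilde{Q}_{r,\perp}\tilde{Q}_{r,\perp}^\top \theta$; substituting the eigendecomposition of $\Me$ and applying submultiplicativity of the spectral norm gives $\theta^\top \Me\theta \leq \|\theta\|^2\,\|\tilde{Q}_{r,\perp}^\top Q_{|\mathcal{I}(\mathcal{F_E})|}\|^2_{\Lambda_{|\mathcal{I}(\mathcal{F_E})|}}$, which matches Term~II in both cases.

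The main technical step is then \eqref{matrix}, which I expect to be the hardest part because of its two-regime behaviour in $r$ versus $|\mathcal{I}(\mathcal{F_E})|$. Writing $Q_{|\mathcal{I}(\mathcal{F_E})|}\Lambda_{|\mathcal{I}(\mathcal{F_E})|}^{1/2}$ as the column concatenation of a leading block $Q_s\Lambda_s^{1/2}$ and a tail block $Q_{>s}\Lambda_{>s}^{1/2}$, I would apply the block identity $\|[A\,|\,B]\|^2 = \|AA^\top + BB^\top\| \leq \|A\|^2 + \|B\|^2$. The tail contributes at most $\|\Lambda_{>s}\| = \lambda_{s+1}(e^+)$, which vanishes when $r \geq |\mathcal{I}(\mathcal{F_E})|$ because $\Me$ has rank at most $|\mathcal{I}(\mathcal{F_E})|$ under \Cref{ass:spurious-corr}. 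For the leading block, since $\tilde{Q}_s$ consists of the top-$s$ columns of $\tilde{Q}_r$, we have $\tilde{Q}_{r,\perp}\tilde{Q}_{r,\perp}^\top \preceq \tilde{Q}_{s,\perp}\tilde{Q}_{s,\perp}^\top$, so $\|\tilde{Q}_{r,\perp}^\top Q_s\Lambda_s^{1/2}\|^2 \leq \lambda_1(e^+)\,\|\tilde{Q}_{s,\perp}^\top Q_s\|^2$. Closing the argument uses the standard $\sin\Theta$ identity for equal-dimensional subspaces, $\|\tilde{Q}_{s,\perp}^\top Q_s\| = \|\tilde{Q}_s\tilde{Q}_s^\top - Q_sQ_s^\top\| = \mathrm{dist}(\tilde{Q}_s,Q_s)$, which yields the $\lambda_1(e^+)\,\mathrm{dist}^2(\tilde{Q}_s,Q_s)$ term and completes the bound. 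The subtle bookkeeping is making sure the choice $s=\min\{r,|\mathcal{I}(\mathcal{F_E})|\}$ cleanly handles both the under-specified regime ($r<|\mathcal{I}(\mathcal{F_E})|$, where the tail term is active) and the over-specified regime ($r\geq |\mathcal{I}(\mathcal{F_E})|$, where only the distance term survives).
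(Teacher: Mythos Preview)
Your proposal is correct and follows essentially the same route as the paper: the counterfactual coupling $\rvx_{e^+\to e}$ with $\ry_{e^+}=\ry_e$, the $1$-Lipschitz bound plus Jensen for log loss, the $(a+b)^2\le 2a^2+2b^2$ split for squared error, and the reduction to $\theta^\top \Me\theta$ via the NCM constraint all match the paper exactly. For \eqref{matrix}, the paper splits into explicit cases $r<|\mathcal{I}(\mathcal{F_E})|$ and $r\ge|\mathcal{I}(\mathcal{F_E})|$, whereas you handle both at once through the block decomposition at index $s$ together with the projection ordering $\tilde{Q}_{r,\perp}\tilde{Q}_{r,\perp}^\top\preceq \tilde{Q}_{s,\perp}\tilde{Q}_{s,\perp}^\top$; this is a clean packaging of the same argument and relies on the identical $\sin\Theta$ identity $\|\tilde{Q}_{s,\perp}^\top Q_s\|=\mathrm{dist}(\tilde{Q}_s,Q_s)$.
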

See the appendix for proofs. The following comments are in order.

{\noindent\bf (i)} \textbf{Error decomposition:} 
      Observe that the   test error due to spurious correlations can be  categorized into two terms.  Term I: {\sl in-domain error} and Term II: {\sl weighted spurious subspace misalignment}.
      In the extreme case, when we have full knowledge of the oracle counterfactual pairs and the the ambient true dimension $|\mathcal{I}(\mathcal{F_E})|,$ Term II vanishes, and thus, the  test error reduced to in-domain error. 
      Intuitively, Term II quantifies the weighted impact of subspace misalignment, where the weights are given by the eigenvalues of the true spurious subspace.

  {\noindent\bf (ii)} \textbf{Accuracy trade-off induced by $r$:} The second term in \eqref{matrix} captures the model misspecification error due to $r \neq |\mathcal{I}(\mathcal{F_E})|$. In practice, $|\mathcal{I}(\mathcal{F_E})|$ is unknown, so one may either overestimate or underestimate it through $r$. Our theory quantifies the impact explicitly within the bound:
  \begin{itemize}[leftmargin=1.5em]
      \item [(a)] If we overestimate the spurious feature dimension, i.e., choose $r > |\mathcal{I}(\mathcal{F_E})|$, then $s = |\mathcal{I}(\mathcal{F_E})|$, and hence $\lambda_{s+1}(e^{+})=\lambda_{|\mathcal{I}(\mathcal{F_E})|+1}(e^{+}) = 0.$ In this case, Term II vanishes, but Term I increases since a larger $r$ reduces the feasible region $Q_{r, \perp}$, resulting in greater in-domain error.
      \item [(b)] If we underestimate the spurious feature dimension, i.e., choose $r < |\mathcal{I}(\mathcal{F_E})|$, then $s = r$, and thus $\lambda_{s+1}(e^{+}) = \lambda_{r+1}(e^{+})>0$, which increases monotonically as $r$ decreases. Here, a smaller $r$ lowers Term I as the feasible region $Q_{r, \perp}$ is larger but simultaneously amplifies Term II due to the larger $\lambda_{r+1}(e^{+})$.
  \end{itemize}
This elegant trade-off is clearly observed in both synthetic and real-world datasets, as shown in the arc-shaped curves in \Cref{fig:synthetic-r} and \Cref{fig:hyperparater-r}, which illustrate how the model performance varies with different values of $r$. 

One simple corollary of our theorem occurs in the extreme case when $r=0$ such that NCM reduces to ERM. 
In this extreme case, we have that the subspace distance term is zero because $s=0$ and $\lambda_{r+1}(e^{+}) = \lambda_{1}(e^{+})$, which is typically large and reflects the spurious correlation captured by ERM. 
This yields the following test-domain error bound for ERM, which is novel to the best of the authors' knowledge. This clearly shows how ERM error increases with increasingly stronger shifts in the test domain.
\begin{corollary}[Test-Domain Error Bound for ERM with Linear Models]
\label{remark:ERM}
Given the same assumptions as \Cref{thm:test-domain-error-ncm},
the test-domain error for ERM for logistic regression is bounded by:
 \begin{align*}
 \E_{(\rvx_{e^+},\ry_{e^+}) \sim \mathbb{P}_\textnormal{test}}\left[\ell_\mathrm{LL}(\theta^\top \rvx_{e^{+}},\ry_{e^+})\right] 
\leq \E_{(\rvx,\ry) \sim \mathbb{P}_\textnormal{train}} [\ell_\mathrm{LL}(\theta^\top \rvx,\ry)]
+\|\theta\|\sqrt{\lambda_{1}(e^{+})},
 \end{align*}
 and similarly for linear regression.
\end{corollary}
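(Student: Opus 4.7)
The plan is to obtain Corollary~\ref{remark:ERM} as an immediate specialization of Theorem~\ref{thm:l2_loss} to the degenerate choice $r = 0$. The first step is to observe that the NCM program \eqref{eqn:estimated-NCM} at $r = 0$ imposes no constraint at all: the truncated-SVD factor $\tilde{Q}_0$ is the empty matrix, so the linear constraint $\theta^\top \tilde{Q}_0 = 0$ holds vacuously for every $\theta$, and the unconstrained minimizer is exactly an ERM solution. Hence Theorem~\ref{thm:l2_loss} applies to any ERM $\theta$ and the rest of the argument is pure substitution.

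The second step is to plug $r = 0$ into the decomposition and simplify Term~II. The in-domain term in Theorem~\ref{thm:l2_loss} is unchanged, so all the work is in the spurious-subspace misalignment. Applying the inequality \eqref{matrix} with $s = \min(r,\,|\mathcal{I}(\mathcal{F_E})|) = 0$, I would use that the subspace distance $\mathrm{dist}^2(\tilde{Q}_0, Q_0)$ vanishes, since both subspaces are empty (equivalently, the projectors $\tilde{Q}_0\tilde{Q}_0^\top$ and $Q_0 Q_0^\top$ are both the zero matrix), while the residual eigenvalue becomes $\lambda_{s+1}(e^{+}) = \lambda_1(e^{+})$. This yields
\begin{equation*}
\big\|\tilde{Q}_{0,\perp}^\top Q_{|\mathcal{I}(\mathcal{F_E})|}\big\|^{2}_{\Lambda_{|\mathcal{I}(\mathcal{F_E})|}} \;\leq\; \lambda_1(e^{+}).
\end{equation*}
Taking square roots and substituting into part~(a) of Theorem~\ref{thm:l2_loss} gives Term~II $\leq \|\theta\|\sqrt{\lambda_1(e^{+})}$, which is precisely the logistic-regression bound in the corollary. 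Substituting directly into part~(b) gives the corresponding squared-error bound with Term~II $\leq 2\|\theta\|^2 \lambda_1(e^{+})$, establishing the ``and similarly for linear regression'' clause.

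The only real obstacle is bookkeeping around the edge case of empty matrices: one must be comfortable interpreting $\tilde{Q}_{0,\perp}$ as a basis for the entire ambient $\mathbb{R}^d$ so that $\tilde{Q}_{0,\perp}^\top Q_{|\mathcal{I}(\mathcal{F_E})|} = Q_{|\mathcal{I}(\mathcal{F_E})|}$ (up to orthogonal rotation, which does not affect the Mahalanobis-induced spectral norm). A clean alternative, if one prefers to avoid this convention, is to sidestep \eqref{matrix} and re-derive Term~II directly from scratch in the $r=0$ case: with no subspace projected out, Term~II equals $\|\theta\| \cdot \|Q_{|\mathcal{I}(\mathcal{F_E})|}\|_{\Lambda_{|\mathcal{I}(\mathcal{F_E})|}}$, which by definition of the Mahalanobis-induced spectral norm is at most $\|\theta\|\sqrt{\lambda_1(e^{+})}$, yielding the same conclusion without invoking the perturbation inequality.
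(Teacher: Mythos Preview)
Your proposal is correct and follows essentially the same approach as the paper: set $r=0$ so that the NCM constraint is vacuous and the method reduces to ERM, then apply Theorem~\ref{thm:l2_loss} with $s=\min(r,|\mathcal{I}(\mathcal{F_E})|)=0$, noting that the subspace-distance term vanishes and $\lambda_{s+1}(e^+)=\lambda_1(e^+)$. Your additional bookkeeping around the empty-matrix convention and the alternative direct derivation are helpful clarifications but not required by the paper's argument.
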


Next, we leverage Wedin's $ \sin \Theta $ theorem \cite{wedin1972perturbation} to further characterize the spurious subspace misalignment term (Term II in \Cref{thm:test-domain-error-ncm}) based on the counterfactual noise $\varepsilon \coloneq \tilde{\Delta}_\rvx - \Delta_\rvx$, where $\Delta_\rvx$ denotes the corresponding perfect counterfactuals.
For this corollary, we assume that the oracle counterfactuals corresponding to the observed counterfactuals spans the entire spurious subspace.
Thus, we can isolate the effect of noisy counterfactuals compared to oracle counterfactuals.
This shows that if the counterfactuals are diverse enough (i.e., they span the spurious subspace) and have bounded noise levels, then we can improve test-domain performance.
\begin{corollary}[Test-Domain Bound in Terms of Counterfactual Noise]
\label{thm:log_loss-cor} 
Instate the setting in \Cref{thm:l2_loss}. Further, assume that the oracle counterfactual difference $\Delta_\rvx$ has a rank equal to the spurious feature subspace, i.e., $\mathrm{rank}(\Delta_\rvx)=|\mathcal{I}(\mathcal{F_E})|$, and suppose the noise matrix $\varepsilon := \tilde{\Delta}_\rvx - \Delta_\rvx$ (with singular values of $\Delta_\rvx$ denoted by $\sigma_j$) satisfies $\sigma_1 \leq (1-1/\sqrt{2}) (\sigma_s - \sigma_{s+1})$ where $s = \min\{r, |\mathcal{I}(\mathcal{F_E})| \}$.
Then, for any $\theta$ that satisfies the NCM constraint \eqref{eqn:estimated-NCM} and any test domain satisfying $\mathcal{M}_{e^{+}}\in \mathcal{M}_{\mathcal{E}},$ the following holds for logistic regression: 
\begin{align*}
&\E_{(\rvx_{e^+},\ry_{e^+}) \sim \mathbb{P}_\textnormal{test}}\left[\ell_\mathrm{LL}(\theta^\top \rvx_{e^{+}},\ry_{e^+})\right] \leq \E_{(\rvx,\ry) \sim \mathbb{P}_\textnormal{train}}[ \ell_\mathrm{LL}(\theta^\top \rvx,\ry)] + \|\theta\|\left( \frac{2\sqrt{\lambda_1(e^+)}}{\sigma_s - \sigma_{s+1}}\|\varepsilon\| +\sqrt{\lambda_{s+1}(e^{+})}\right),
\end{align*}
and similarly for linear regression.
\end{corollary}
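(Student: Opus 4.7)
The plan is to reduce the corollary to Theorem~\ref{thm:l2_loss} plus a single application of Wedin's $\sin\Theta$ perturbation theorem. Theorem~\ref{thm:l2_loss}(a) already delivers the decomposition of the logistic test risk into an in-domain term and $\|\theta\|\cdot\|\tilde{Q}_{r,\perp}^\top Q_{|\mathcal{I}(\mathcal{F_E})|}\|_{\Lambda_{|\mathcal{I}(\mathcal{F_E})|}}$, and inequality~\eqref{matrix} further controls this misalignment by $\lambda_1(e^{+})\,\mathrm{dist}^2(\tilde{Q}_s, Q_s) + \lambda_{s+1}(e^{+})$. All that remains is a purely spectral step: express $\mathrm{dist}(\tilde{Q}_s, Q_s)$ in terms of $\|\varepsilon\|$ and the spectral gap $\sigma_s - \sigma_{s+1}$ of the oracle difference matrix $\Delta_\rvx$.

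To carry this out, I would first observe that under the rank assumption $\mathrm{rank}(\Delta_\rvx)=|\mathcal{I}(\mathcal{F_E})|$ the top-$|\mathcal{I}(\mathcal{F_E})|$ left singular subspace of $\Delta_\rvx$ coincides with $Q_{|\mathcal{I}(\mathcal{F_E})|}$, so its truncation to the leading $s=\min\{r,|\mathcal{I}(\mathcal{F_E})|\}$ vectors equals $Q_s$; by construction the corresponding left singular subspace of $\tilde{\Delta}_\rvx=\Delta_\rvx+\varepsilon$ is $\tilde{Q}_s$. Wedin's $\sin\Theta$ theorem in its projection-distance form ($\mathrm{dist}(Q,Q') = \|QQ^\top - Q'{Q'}^\top\|$) then yields
\[
\mathrm{dist}(\tilde{Q}_s, Q_s) \leq \frac{\sqrt{2}\,\|\varepsilon\|}{\sigma_s - \sigma_{s+1} - \|\varepsilon\|}.
\]
The noise regularity condition $\|\varepsilon\| \leq (1-1/\sqrt{2})(\sigma_s - \sigma_{s+1})$ (interpreting the $\sigma_1$ in the statement as the operator norm of $\varepsilon$, since any other reading makes the inequality vacuous) implies $\sigma_s - \sigma_{s+1} - \|\varepsilon\| \geq (\sigma_s - \sigma_{s+1})/\sqrt{2}$, which sharpens the Wedin bound to $\mathrm{dist}(\tilde{Q}_s, Q_s) \leq 2\|\varepsilon\|/(\sigma_s - \sigma_{s+1})$.

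Substituting back into~\eqref{matrix} and applying the elementary $\sqrt{a+b}\leq\sqrt{a}+\sqrt{b}$ yields
\[
\bigl\|\tilde{Q}_{r,\perp}^\top Q_{|\mathcal{I}(\mathcal{F_E})|}\bigr\|_{\Lambda_{|\mathcal{I}(\mathcal{F_E})|}} \leq \frac{2\sqrt{\lambda_1(e^{+})}\,\|\varepsilon\|}{\sigma_s - \sigma_{s+1}} + \sqrt{\lambda_{s+1}(e^{+})}.
\]
Multiplying by $\|\theta\|$ and combining with the in-domain term from Theorem~\ref{thm:l2_loss}(a) produces the stated logistic bound. The linear-regression case follows identically from Theorem~\ref{thm:l2_loss}(b): the misalignment enters squared, but the same chain of inequalities (together with $(a+b)^2\leq 2a^2+2b^2$, whose factor of $2$ is already present as the leading constant in part (b)) yields the claimed form.

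The main obstacle is the spectral perturbation step itself: one must invoke the variant of Wedin that controls the projection distance $\|\tilde{Q}_s\tilde{Q}_s^\top - Q_sQ_s^\top\|$ used in~\eqref{matrix} rather than the more commonly tabulated principal-angle $\|\sin\Theta\|$, and then absorb the denominator degradation $\sigma_s - \sigma_{s+1} \mapsto \sigma_s - \sigma_{s+1} - \|\varepsilon\|$ using precisely the $(1-1/\sqrt{2})$ slack afforded by the noise assumption, so that the final constant is the clean $2$ appearing in the statement. Everything else is bookkeeping and a direct reuse of Theorem~\ref{thm:l2_loss}.
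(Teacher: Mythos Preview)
Your proposal is correct and follows essentially the same route as the paper: identify the column space of $\Delta_\rvx$ with the spurious subspace $Q_{|\mathcal{I}(\mathcal{F_E})|}$ via the rank assumption, apply Wedin's $\sin\Theta$ theorem to bound $\mathrm{dist}(\tilde{Q}_s,Q_s)$ by $2\|\varepsilon\|/(\sigma_s-\sigma_{s+1})$ under the noise condition, and substitute into inequality~\eqref{matrix} from Theorem~\ref{thm:l2_loss}. The only cosmetic difference is that the paper packages the $(1-1/\sqrt{2})$-condition and the resulting constant $2$ into a standalone Wedin lemma and invokes it directly, whereas you sketch how that constant arises from the denominator degradation $\sigma_s-\sigma_{s+1}-\|\varepsilon\|$; the final $\sqrt{a+b}\leq\sqrt{a}+\sqrt{b}$ step you mention is left implicit in the paper's ``combining with the original results.''
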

\paragraph{Choice of Clean $\Delta_\rvx.$}
It is critical to note that we do not make any assumptions about the noise except that $\sigma_1 \leq (1-1/\sqrt{2}) (\sigma_s - \sigma_{s+1}).$\footnote{
Such a technical condition results from Wedin's $\sin \Theta$ theorem \citep{wedin1972perturbation}.
In reality,  we found that the performance of NCM still outperforms ERM even when such a condition is not satisfied (cf. \Cref{sec:empirical})
and could be benefited by utilizing more pairs (cf. \Cref{fig:synthetic-fewshot}).}
Thus, given any observed noisy counterfactuals, we could actually choose any such real counterfactual matrix that satisfies this condition, i.e., we could theoretically choose an oracle counterfactual matrix $\Delta_{\rvx}$ that spans the spurious space and minimizes the bound.
Therefore, this result shows that as long as the counterfactuals are reasonably diverse (i.e., they span the spurious subspace) and they are not too noisy, our NCM approach will improve the robustness to spurious correlation.

\paragraph{A Small Number of Perfect Counterfactual Pairs Is Sufficient.}
Additionally, we note that a simple corollary of this result is that by choosing $s \geq |\mathcal{I}(\mathcal{F_E})|$ with perfect counterfactuals (i.e., no noise), then the test-domain error will equal the train domain error.
Furthermore, to satisfy the rank condition, theoretically we only need $|\mathcal{I}(\mathcal{F_E})|$ noiseless pairs whose differences are linearly independent of each other.
This emphasizes that in the ideal case only a small number of diverse pairs are needed. In contrast to the IRM requires $\lvert \mathcal{E}_\text{train}\rvert \geq \mathcal{I}(\mathcal{F_E})$ \citep{rosenfeld2020risks}. If there is noise, then more pairs will be needed but we expect  that still only a small number is needed to improve robustness. We further observe this empirically (cf. \Cref{sec:empirical}).

\section{Empirical Evaluation}\label{sec:empirical}
In this section, we present experiments on both synthetic and real world datasets. {\bf (i)} The synthetic dataset is used to validate the theoretical results. We evaluate robustness using both noisy and oracle CF pairs, and confirm the 
few shot counterfactual pairs requirement
 of the CF pairs and the spurious feature dimension (cf. \Cref{thm:log_loss-cor}). We also validate the trade-off effect of $r$ (cf. \Cref{thm:l2_loss} (ii)). 
{\bf (ii)} Beyond synthetic data, we evaluate NCM \eqref{eqn:estimated-NCM} via linear probing on a frozen CLIP model \citep{radford2021learning} across three real world datasets: ColoredMNIST, PACS, and Waterbirds. While CLIP already demonstrates strong zero shot transfer \citep{radford2021learning}, NCM \eqref{eqn:estimated-NCM} further improves robustness to spurious correlations over CLIP. {\bf (iii)} We compare three matching strategies: random matching, nearest neighbor matching \citep{mahajan2021domain}, and NCM \eqref{eqn:estimated-NCM}, and show the superiority of CF based approaches. Details on data generation, validation, and hyperparameter tuning are provided in \Cref{app-sec:hyperparameter}. All experiments report in-domain validation in the main text, with additional results including oracle-validation, hyperparameter sensitivity, and experiments with traditional deep models as well as a result from the additional PACS dataset are provided in \Cref{app-sec:additional-result}.

\begin{figure}[!ht]
\centering
\subcaptionbox{Acc vs. $k$ under different noise,  $r=\min(k, 20).$  \label{fig:synthetic-fewshot}}[0.49\linewidth]{\includegraphics[width=\linewidth]{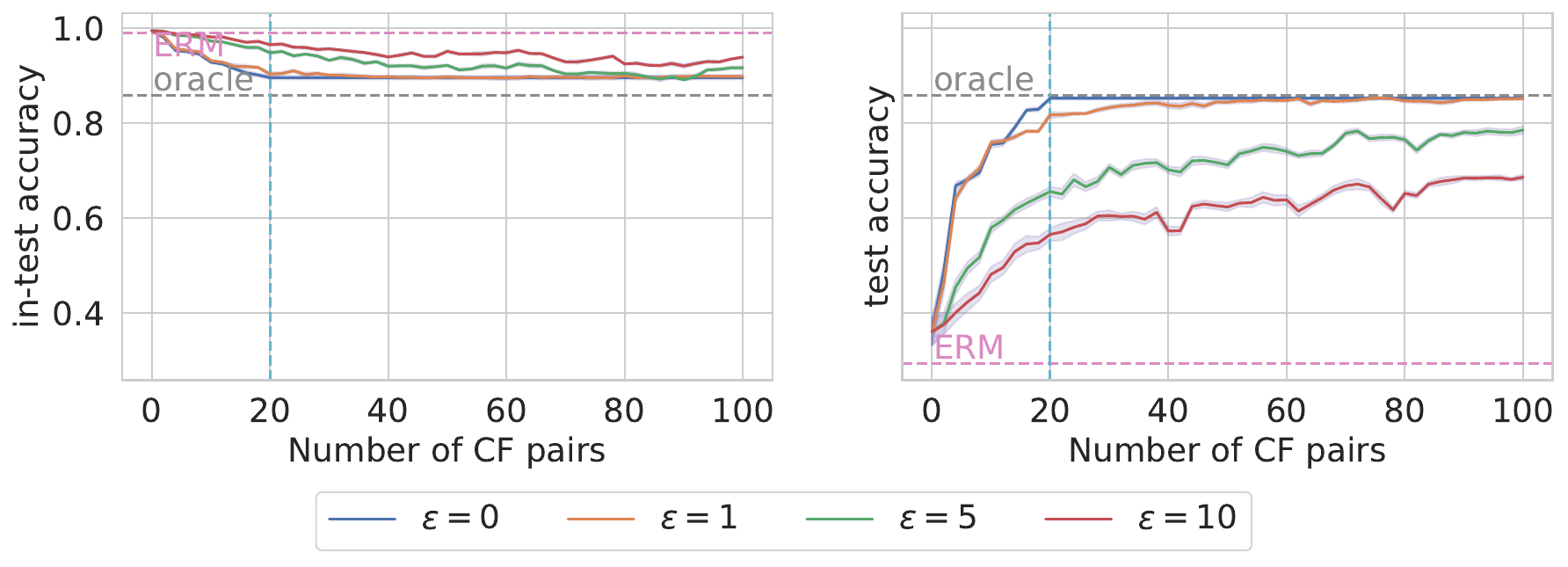}}
\subcaptionbox{Acc vs. $r$ under different noise, $k=100.$ \label{fig:synthetic-r}}[0.49\linewidth]{\includegraphics[width=\linewidth]{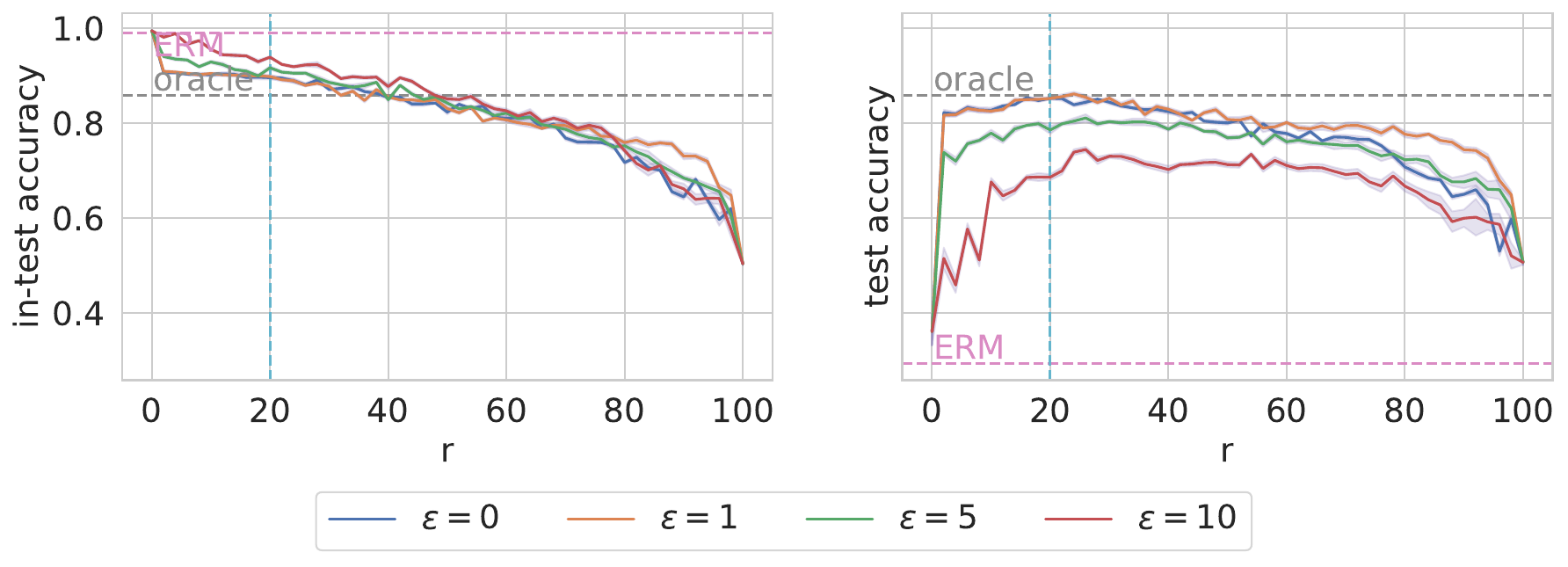}}
\caption{Result on the synthetic dataset with NCM. We report both in-domain test accuracy (in-test accuracy) and test domain accuracy (test accuracy). We choose $m=100$ and $\lvert\mathcal{I}(\mathcal{F_\mathcal{E}})\rvert=20$ (denoted by vertical dash line). The horizontal lines represent the ERM accuracy and oracle accuracy (ERM train on test domain). The vertical line at $20$ denotes $\mathcal{I}(\mathcal{F_E}).$ $\varepsilon=0$ means oracle CF pairs. The solid curves represent the mean over 10 runs with shaded regions indicating standard deviations.}
\vspace{-1em}
\end{figure}

\begin{wraptable}{r}{0.5\textwidth}
\vspace{-1em}
\centering
\captionsetup{position=top}
\captionbox{Main Results with in-domain validation. Results with oracle validation can be found in \Cref{tab:MNIST-main}, \Cref{tab:waterbirds-main}, and \Cref{tab:PACS-main} in the appendix.``WG'' represents worst group.\label{tab:main}}[0.5\textwidth]{
\includegraphics[width=\linewidth]{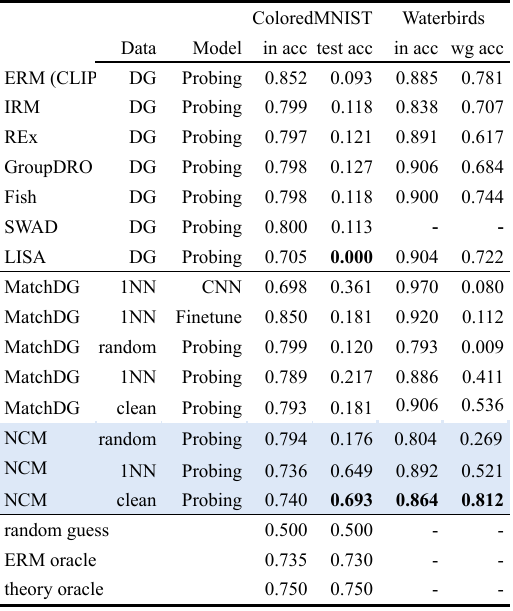}}
\vspace{-1.5em}
\end{wraptable}

\paragraph{Synthetic Experiments.}
The results indicate that (a) 
 {\sl  oracle accuracy:} NCM \eqref{eqn:estimated-NCM} achieves oracle-level accuracy under small noise of CF pairs ($\varepsilon = 0, 1$), as if the model were trained directly on the test domain (see \Cref{fig:synthetic-fewshot}).
(b) {\sl Few-shot CF pairs:} we observe that only $k = \mathcal{I}(\mathcal{F_\mathcal{E}}) = 20$ oracle CF pairs are required to correctly identify the spurious space, achieving the best possible performance. When the noise is small ($\varepsilon=1$), the performance remains optimal. However, as the noise become larger ($\varepsilon = 5,10$), the performance degrades, as predicted by \Cref{thm:l2_loss}. (c) {\sl  Trade-off effect of $r$:} we fix the number of noisy CF pairs as $k=100$ and evaluate the effect of varying the subspace rank $r$ on test accuracy (see \Cref{fig:synthetic-r}) under different noises. For small noise levels ($\varepsilon = 0, 1$), we observe monotonically decreasing in-domain test accuracy and the arch shape test accuracy curves (cf. \Cref{thm:l2_loss} (ii)), showing the trade-off effect of $r.$ The best performance achieves when $r\approx\mathcal{I}(\mathcal{F_E}).$ In this case of large noise ($\varepsilon=5,10$), the noise makes truncated SVD decomposition more prone to preserving some spurious features while removing some invariant features, so a slightly aggressive selection of $r$ yields better results. %

\paragraph{Real-world Dataset.} We present the results of NCM \eqref{eqn:estimated-NCM} on two representative real-world datasets. ColoredMNIST \citep{arjovsky2019invariant} is a semi-synthetic dataset, but is widely recognized as difficult due to strong \emph{accuracy on the inverse line effect} \citep{gulrajani2021in, salaudeen2024domain}. Waterbirds-CF is a highly imbalanced dataset, where the minority group consists of only 240 samples. We construct CF pairs by matching these with 240 randomly selected samples from the majority group of original  (See \Cref{app:waterbirds-dataset} for detail). This dataset is used to highlight the robustness of our method in this domain-imbalance scenario. 

We summarize our results in \Cref{tab:main} and give more detailed tables in the appendix. On ColoredMNIST result, our result shows that NCM probing on CLIP pretrained model \eqref{eqn:estimated-NCM} performs well on both in-domain and oracle-validation (cf. \Cref{tab:MNIST-main}), achieving test domain accuracies of 69.3\% and 71.4\%, respectively, nearly matching the ERM oracle accuracy of 73\%, demonstrating the effectiveness of NCM \eqref{eqn:estimated-NCM}. The performance difference between two validation methods are only 2\%, indicating that NCM \eqref{eqn:estimated-NCM} is less sensitive to hyperparameter tuning. This stands in sharp contrast to other algorithms such as ERM, IRM, GroupDRO, Fish, and REx, which only achieve around 10\% accuracy with in-domain validation and 20\%-66\% with oracle validation except for LISA which achieves 69.3\% on both validation methods. Our results show that NCM \eqref{eqn:estimated-NCM} with noisy CF pairs achieves 81.2\% worst-group accuracy, outperforming the best baseline (ERM) by 3.1\% using in-domain validation.
In contrast, all other methods underperform ERM. 
We further include the oracle validation and other baseline methods with CLIP on oracle validation. It also achieves 86\% accuracy using oracle validation (see \Cref{tab:waterbirds-main}), outperforming the best probing method on CLIP by 3.3\%. We further include finetuning MatchDG with iterative matching as well as the ResNet50 end-to-end training for comparison with iterative matching. Due to the non-linear backbone and the existence of noises, the MSE constraint cannot effectively find the correct invariant subspace thus suffers from the suboptimal results. Our observation are as follows: First, NCM \eqref{eqn:estimated-NCM} consistently outperforms all baselines across these datasets. Second Random pairing and 1 Nearest Neighbor (1NN) pairing perform well on ColoredMNIST, but fail on Waterbirds. On ColoredMNIST, invariant features are inherently similar across samples, allowing even random pairing to produce reasonable noisy CF pairs. In contrast, Waterbirds exhibit greater variability in the features making it difficult for 1NN to find meaningful counterfactual matches.

\section{Related Works}\label{sec:related-works}
{\sl Data augmentation and generation:} Data augmentations can be seen as simple-to-generate counterfactual pairs, where the augmentations implicitly encode knowledge about desired invariances. For example, standard functions like rotation, scaling, and noise addition suggest that such transformations should not alter the predicted class \citep{honarvar2020domain, shorten2019survey}. More sophisticated strategies follow this principle; LISA, for instance, is a Mixup-inspired method that learns domain-invariant predictors through intra-label and intra-domain mixing to encourage the model to respect class boundaries \citep{yao2024improving}.

Another line of research uses generative models to inflate training data. DIRT \citep{nguyen2021domain} suggests using StarGAN \citep{choi2018stargan} to generate paired samples, while other work has used ComboGAN \citep{anoosheh2018combogan} to generate new data \citep{rahman2019multi}. From our perspective, these generated samples can be seen as complex, class-preserving data augmentations to estimate pairs regarding some type of invariances. Our work provides a causal language and theoretical guarantee for this approach. 

{\sl Distribution or sample matching in addressing spurious correlations:}
Invariant Risk Minimization (IRM) \citep{arjovsky2019invariant} aim to mitigate spurious correlations by learning domain-invariant representations. Despite their theoretical appeal, IRM-based approaches often under perform in practice, prompting several works to analyze and refine them \citep{rosenfeld2020risks,krueger2021out,ahuja2022invariance}. Beyond distribution matching, MatchDG \citep{mahajan2021domain} introduces an iterative sample-level matching objective that aligns representations across domains in latent space. Our method similarly employs sample-wise matching but crucially, we provide a theoretical guarantee and deeper exploration on the properties of these pairs required to achieve robustness. 

MatchDG \citep{mahajan2021domain} and DIRT \citep{nguyen2021domain}, which adopt a similar idea to address the simple setting: oracle counterfactual pairs.
Their objectives can be viewed as Lagrangian formulations of the simple CF matching \eqref{eqn:objective-few-shot-simple}. However, these approaches lack theoretical analysis and do not explicitly identify the necessary properties that the data pairs must satisfy to ensure robustness. Furthermore, their methods fail when the data pairs are approximate or noisy, which are precisely the conditions encountered in practical scenarios, such as those in {\sl Level 1} and {\sl Level 2} (cf. \Cref{sec:method}). 

{\sl Causal inference seeking invariant predictor for robustness:} The goal of domain generalization in a causal perspective is to find a representation $\Phi$ of $\rvx$ such that $\ry \perp \re | \Phi(\rvx)$. Different approach to induce $\Phi$ has been heavily explored. Most of the causal inference type of work focusing on observable causal variables \citet{magliacane2018domain} proposes to find subset of causal variable $\Phi(\rvx)$ in $\rvx$, where $\rvx$ is the set of observable causal variables and $\Phi(\rvx) \subset \rvx$, such that $\ry \perp \re | \Phi(\rvx)$ holds. \citet{subbaswamy2019preventing} considers the graph surgery estimator that finding the stable estimator by removing unstable mechanism from the joint factorization.

If the causal variables are not directly observable, it is much harder to apply causal inference. Due to the observable variable $\rvx$ forms a collider to all the causal variables, making it nearly impossible to find the robust classifier. Despite the challenges, there are plenty of attempts in the literature. \citet{arjovsky2019invariant} proposes an bi-level objective to find $\Phi(\rvx)$ such that a shared classifier $w$ could be optimal on all the training environments. There are other constraint or regularization based approaches trying to find $\Phi(\rvx)$ by minimizing some discrepancy or variance between the training environments \citep{li2018learning, sun2016deep, shi2021gradient, rame2022fishr, mukhoti2020calibrating}.

\section{Conclusion and Discussion} We address spurious correlations from a data-centric view, showing that introducing (noisy) counterfactual pairs during training improves model robustness. This mirrors classical supervised learning, where labels guide models toward target concepts without formal definitions; similarly, invariant pairs implicitly identify and mitigate spurious features.
One challenge of our method is obtaining counterfactual pairs. While straightforward in tasks like object classification (e.g., using image editing for spurious features, as shown in the Waterbirds dataset), it is more complex in fields like medical imaging, requiring expert involvement. However, experts can now help by creating or validating a few high-quality counterfactuals to improve robustness suggested by our findings.

\section*{Acknowledgements}
R.B., Z.Z., and D.I. acknowledge support from NSF (IIS-2212097), ARL (W911NF-2020-221), and ONR (N00014-23-C-1016). Any opinions, findings, and conclusions or recommendations expressed in this material are those of the authors and do not necessarily reflect the views of the sponsor.

\bibliographystyle{plainnat}
\bibliography{references}

\clearpage
\appendix
\section*{\Large{Appendix}}\label{sec:Appendix}
The code is available at \href{https://anonymous.4open.science/r/NCM-A35E}{https://anonymous.4open.science/r/NCM-A35E}. 

\section*{The Use of Large Language Models (LLMs)}
In the preparation of this manuscript, authors utilized the Large Language Model (LLM) to assist in two capacities: for brainstorming initial conceptual approaches to theoretical proofs and for refining the text for grammatical accuracy and clarity. The authors are fully responsible for all substantive content and the final scientific conclusions.

\section{Expanded Explanation}
In this section, we include additional discussions about the causal model, assumptions, data availability, etc.
\subsection{Expanded explanation of Latent Causal Model} \label{app-sec:scm}
\begin{figure}[!ht]
\captionbox{
Illustration of the latent causal model. The ancestors of the target $\ry$ are $\rvz_1, \rvz_2$, which are assumed to be invariant across domains (see \Cref{ass:spurious-corr}). On the other hand, $\rvz_3, \rvz_4$ are spurious features. To be specific, $\rvz_3$ is confounded with $\ry$, and $\rvz_4$ is descendant of $\ry$. Because they are not ancestors of $\ry$, thus they are spurious.\label{fig:model-illustration}}[0.49\linewidth]{\includegraphics[width=0.95\linewidth]{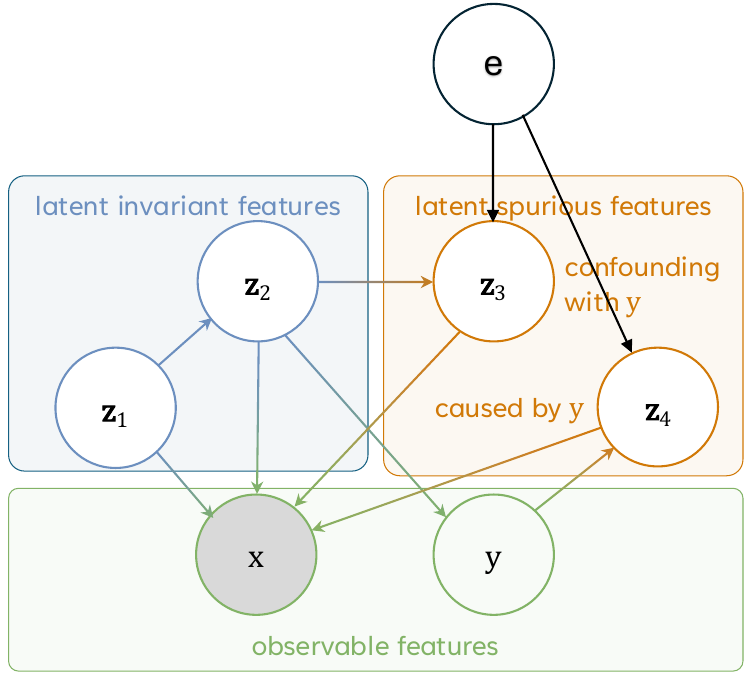}}
\hfill
\captionbox{An illustration of oracle counterfactual pairs represented by our model, where $f_1$ and $f_2$ are two SCMs' solution function for domain $1$ and domain $2$, $g_{\rvx}$ is the observation function from $\rvz$ to $\rvx.$ In this figure, we do not plot the prediction target $\ry$ and correspondingly $g_{\ry}.$ \label{fig:cf-pair-illustration}}[0.49\linewidth]{\includegraphics[width=\linewidth]{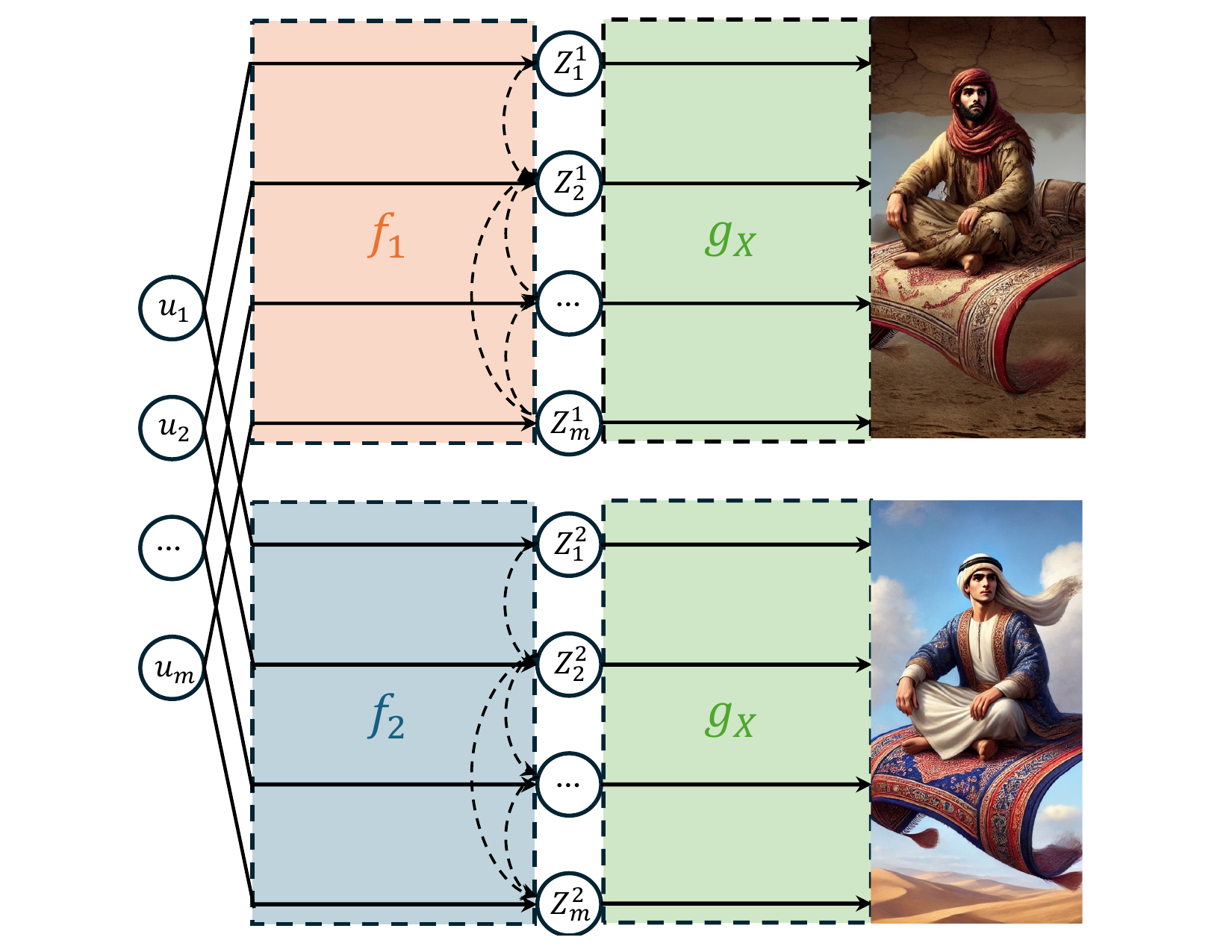}}
\end{figure}

\begin{figure}[!ht]
    \centering
    \subcaptionbox{With known causal fes, we have a robust predictor $\rvy\perp \rve\vert \rvz_1, \rvz_2,\rvz_3$. Yet, because of the graph, we know $p(\rvy \vert \rvz_1,\rvz_2, \rvz_3)=p(\rvy \vert \rvz_1,\rvz_2)$. This suggests that including $\rvz_3$ is safe but the optimal predictor will be constant to $\rvz_3$ due to the conditional independence, thus does not violate our lemma.\label{fig:dag1}}[0.49\linewidth]{\includegraphics[width=0.9\linewidth]{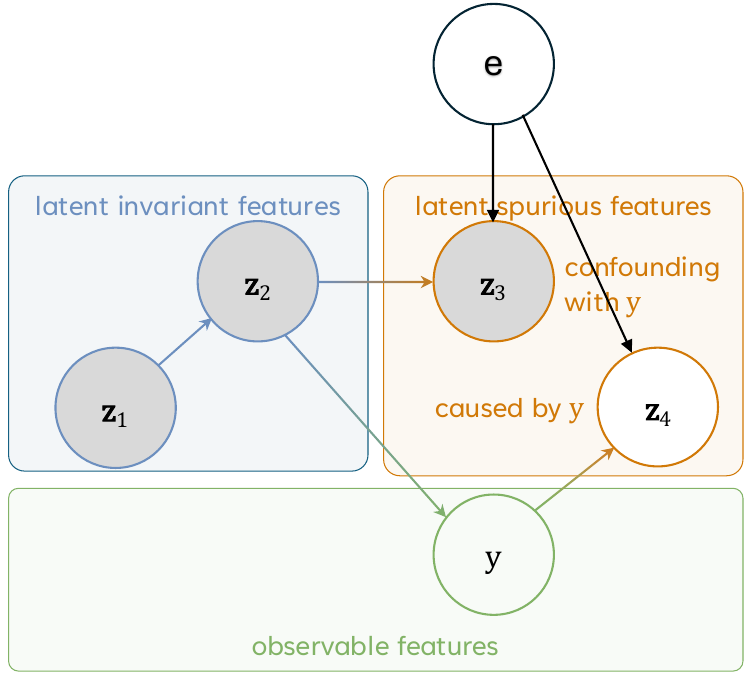}}
    \subcaptionbox{With known causal variables, we have $\rvy\not\perp \rve \vert \rvz_2, \rvz_4.$ Thus, using $\rvz_4$ is not robust, which is the same as previous results.\label{fig:dag2}}[0.49\linewidth]{\includegraphics[width=0.9\linewidth]{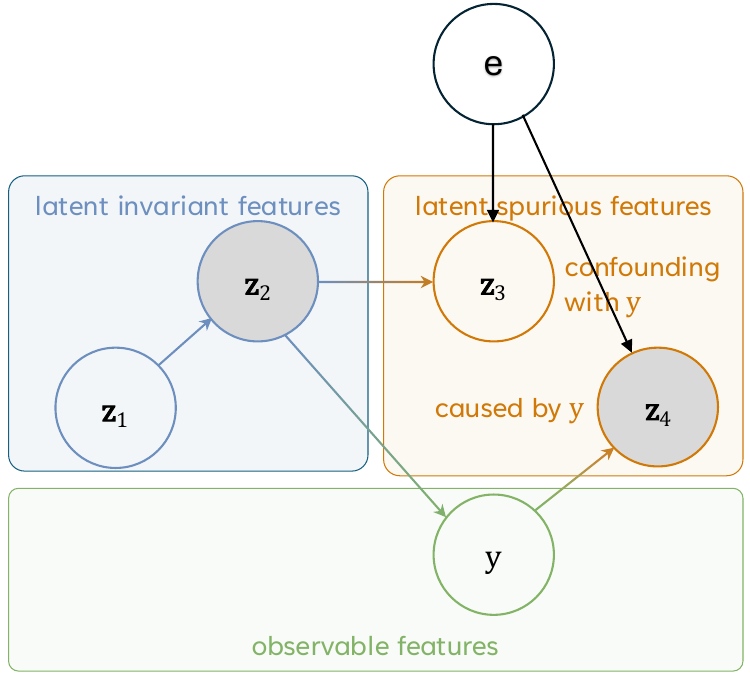}}
    \subcaptionbox{For spurious correlation, we have $\rvy\perp \rve$, but because we condition on the collider $\rvx,$ we have $\rvy\not\perp \rve\vert \rvx$, leading to non-robust predictor. Even when we further hypothetically condition on any subset of the latent variables $\rvz'\subset \{\rvz_1,\rvz_2,\rvz_3, \rvz_4\}$, we still have $\rvy\not\perp \rve\vert \rvz',$ i.e., a non-robust predictor.\label{fig:latent-original}.}[0.49\linewidth]{\includegraphics[width=0.9\linewidth]{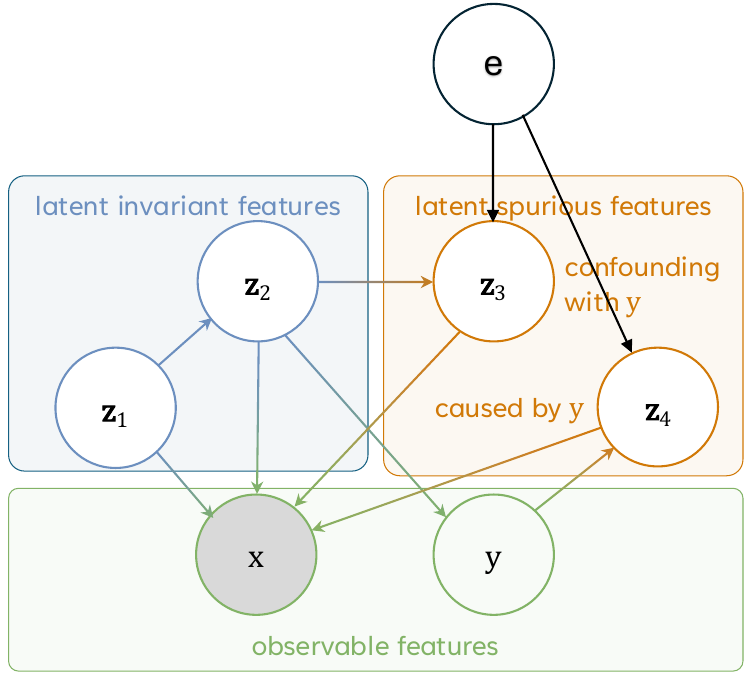}}
    \subcaptionbox{Our proposed method can be seen as post-processing intervention $\phi$ on $\rvx=g(\rvu_\rvx, \rvz_1,\rvz_2,\rvz_3, \rvz_4)$ such that $\rvx'\coloneqq \phi(\rvx) = \phi(g(\rvu_\rvx, \rvz_1,\rvz_2,\rvz_3,\rvz_4))= g'(\rvu_\rvx, \rvz_1,\rvz_2)$ for some $g'$ that only depends on $\rvu_\rvx$ and $\rvz_1$. The post-processing function $\phi$ can be seen as forming a new intervened SCM with certain incoming edges removed. Invariant pairs provide additional signal to find such $\phi.$ In the linear case, we simply project out the spurious feature by truncated SVD. \label{fig:latent-interventional}}[0.49\linewidth]{\includegraphics[width=0.9\linewidth]{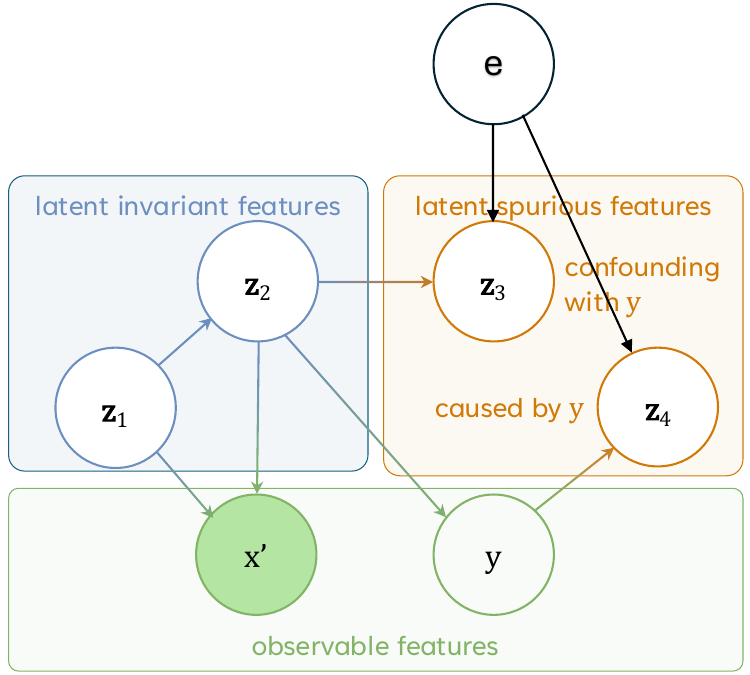}}
    \caption{Illustration of robustness prediction corresponding to non-latent and latent causal variables. } \label{fig:fewshot-d-sep-ill}
\end{figure}

\Cref{fig:model-illustration} is an illustration of the proposed latent causal model, which satisfies the conditions in \Cref{def:latent_causal_model} and \Cref{ass:spurious-corr}. We first note that this is a fairly common assumption in the field (e.g., \cite{rosenfeld2020risks, arjovsky2019invariant}. We also suggest that this is not a significant limitation. Suppose we had a graph where directly causes, i.e., $y=g_y(u_y,x,z_{\text{Pa}(y)}).$ We could then create a new graph with a new latent node equal to $x$, i.e., $z_x=x$ and change $g_x(z_1,z_2,\dots,z_x,\dots,)=z_x=x$ but now $g_y$ 
would only depend on latent variables. Thus, because we already deal with complex latent variables, this is a not a limitation but more syntactic.

The domain counterfactuals encode crucial information about the underlying data generation mechanisms and help avoid reliance on features that are spuriously correlated with labels in the training dataset. Oracle counterfactual pairs are samples living in two different causal worlds that shares the same exogenous noise. In the example of \Cref{fig:cf-pair-illustration}, two Aladdin images are oracle counterfactual pairs where the intervention variable is ``wealth" represented by some latent variable $z_i$. $g_\rvx$ encodes the causal information to images. 

From a causal perspective, the key of robustness to spurious correlation is getting independence of $\ry$ and $\rve$ given some conditioning statements. If the causal $\rvz$ variables are observable, then one could use a d-separation criteria to realize that $\rvz_1$, $\rvz_2$ and $\rvz_3$ is a d-separating set for $\rve$ and $\rvy$ as in \Cref{fig:dag1}. In this simplified scenario, conditioning on $\rvz_4$ would not be robust as illustrated in \Cref{fig:dag2}. However, when conditioning on $\rvx$, $\rvy$ and $\rve$ are dependent because of the collider effect of $\rvx$ and that $\rvz_4$ is a descendant of $\rvy$ as illustrated in \Cref{fig:latent-original}. Our method is able to overcome this limitation by being viewed as a postprocessing intervention on $\rvx$ that removes the edges from the spurious features to $\rvx$. This enables $\rve$ and $\rvy$ to be independent. From this perspective, our approach learns this postprocessing function $\phi$ based on counterfactual pairs.

\begin{wraptable}{r}{0.5\textwidth} 
    \centering
    \caption{An illustrative taxonomy of scenarios from explicit knowledge to no knowledge of spurious correlations.}
    \includegraphics[width=\linewidth]{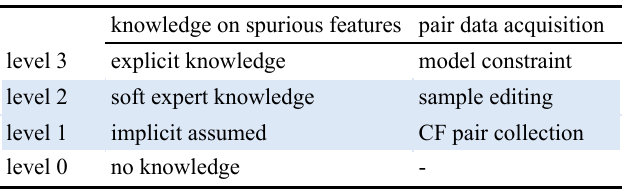}\label{tab:level_of_taxonomy}
    \vspace{-3em}
\end{wraptable}

\subsection{Availability of invariant pairs}\label{app-sec:availability-pair}
While invariant pairs can be hard to acquire, we argue that it is feasible and practical to obtain in certain scenarios. 

For certain applications, obtaining such CF pairs are both possible and effective. \Cref{tab:level_of_taxonomy} from the introduction summarizes a range of cases where there could be enough implicit knowledge of spurious correlations to collect them. We further outline these levels in detail below.

\emph{Level 3 - Explicit knowledge:} In some scientific settings, spurious correlations can be coded as an explicit and mathematical modeling constraint. For example, SchNet \citep{schutt2018schnet} builds molecule symmetries and invariance directly into the model structure. This case is straightforward but does not hold in general, so we do not consider it in our work.

\emph{Level 2 - Domain expert ``soft'' knowledge of spurious features:} In some applications, domain experts can articulate which features are irrelevant, even if they cannot encode this knowledge as model constraints. For example, an x-ray technician knows that certain medical equipments should not affect their diagnosis of cancer or not \citep{zech2018variable, oakden2020hidden}. In this case, CF pairs can be either manually curated (via image editing or generative models) or collected (e.g., by obtaining paired x-rays with and without fluid lines). Simple image augmentation techniques like rotations, flips or color distortions may also fall under this category as they implicitly encode spurious features that are assumed to not affect the downstream tasks (like ColoredMNIST experiment (cf. \Cref{sec:empirical})).

\emph{Level 1 - Implicit knowledge:} At this level, the only differences between domains are assumed to be spurious features because of application-specific knowledge, but domain experts may not know the spurious features a priori. As one example, the differences between data coming from two similar microscopes can be assumed to be spurious since the measurement effects should not affect the underlying physical phenomena of interest. In this case, it is feasible to collect a small number of counterfactual pairs by measuring a small number of samples with both microscopes. 

\emph{Level 0 - No knowledge:} Without any hints or assumptions about spurious features as in levels 1-3, making a model robust to spurious features is likely infeasible. To illustrate, consider a simple causal structure without any knowledge on (latent) spurious features: $\rvz_1\to \ry\to \rvz_2$ where only $\rvz_1$ is invariant. Without any knowledge, there is no information to distinguish between invariant feature $\rvz_1$ and spurious feature $\rvz_2$. Moreover, if $\rvz_2$ is more strongly correlated to $\ry$ or related to $\ry$ that is easier to extract from inputs $\rvx$, models are prone to shortcut learning \citep{hermann2024on}, the model prediction will rely heavily or nearly solely on $\rvz_2$. 

We specifically target the hard and feasible levels 1 and 2, and suggest that in certain cases these pairs could feasibly be collected or created either via manual editing or generative AI tools \citep{rombach2022high, betker2023improving}. It seems that our method requires additional domain knowledge compared to the standard DG setting. We claim this is an \emph{alternative} form of domain knowledge, as the standard DG setting also requires domain knowledge, as it is encoded in the multi-domain data collection (see \Cref{app-sec:implicit-domain-knowledge} for further explanation). Noticing that this CF pairs acquisition are still costly, so we ask:  {\sl if we have $k$ estimated CF pairs with noise $\varepsilon$, what kind of robustness guarantee can we get?}
\subsection{Discussion on Sample Complexity}
We wanted to further clarify the difference between the data requirements of IRM and NCM by considering the data scaling requirements for a certain number of intervened/spurious features, i.e., $\lvert\mathcal{I}(\mathcal{F_E})\rvert$. For this comparison, we will assume an theoretically ideal setting with an infinite number of samples for each domain, but a finite number of perfect counterfactual pairs, where the true data-generating process is compatible with logistic loss.

IRM \citep{arjovsky2019invariant}: As proven in a prior study \citep[Corollary 5.2]{rosenfeld2020risks}, achieving optimal invariant predictors with IRM requires the number of training domains $\re$ to be greater than the number of spurious feature dimensions, i.e., $\re>\lvert\mathcal{I}(\mathcal{F_E})\rvert.$ And this requirement is true even if there is an infinite number of samples in each of the domains.

Noisy Counterfactual Matching (NCM): Our method, NCM, requires the number of linear independent invariant pairs, $k,$ to be greater than the spurious feature dimension to achieve optimal invariant predictors, as shown in our paper's Corollary 4, i.e., $k\geq \lvert\mathcal{I}(\mathcal{F_E})\rvert.$ Linear independence could be satisfied if we assume full rank exogenous noise and soft intervention across domains, which are both common assumptions that are easy to satisfy.

It is important to note that the data-generating process for IRM is a special case of the structural causal model (SCM) that our NCM uses. (Specifically, while IRM does not account for the ancestors of the target variable $\re,$ it does permit some descendants of $\ry$ to be unintervened, which are safe features for prediction based on d-separation.)

This distinction of the data requirement has significant practical implications, particularly in high-dimensional applications where the spurious feature dimension could be large: IRM would require the number of domains $\re$ to scale with the spurious feature dimension, which may be infeasible or too costly in practice (e.g., x-ray machine example). Our NCM, on the other hand, only requires the number of counterfactual pairs $k$ to scale with the spurious feature dimension, which may be significantly more feasible (e.g., x-ray machine example).

\subsection{Discussion of Implicit Domain Knowledge Requirement}\label{app-sec:implicit-domain-knowledge}
Nearly all methods require domain knowledge to some extends. For instance, IRM implicitly uses expert’s knowledge based on the specification of the domain labels. In IRM, domain labels are by definition the way of specifying what the predictions should be invariant to. Another example of using expert knowledge is data augmentations (See section 6 in our paper). While it seems that data augmentations don’t use any domain knowledge, they actually implicitly use the domain knowledge that "the predictions should not change under this augmentation" (e.g., small rotations or color distortions). While not explicit, these data setups actually incorporate domain knowledge. As a concrete example, take the Rotated MNIST dataset. If the goal is to predict the digit, then rotation can be a domain label. However, if the goal is to predict the rotation, then the digit can be a domain label. Thus, knowledge about the task and what is irrelevant is key for even defining what parts of the data can be considered domain labels. We argue that the expert knowledge required for validating or creating domain counterfactuals is similar in spirit to the expert knowledge for defining domain labels or data augmentations. They are implicit ways of incorporating expert knowledge.

While our data setup differs from standard domain generalization tasks, we argue that expert knowledge is not required to employ the learning algorithm itself, but rather to construct the appropriate dataset. One approach is to use standard domain generalization methods like IRM, which require labeled data from multiple domains. In contrast, we present an alternative approach that requires only possibly noisy invariant pairs and labeled data from a single domain. Note that in our setting, the invariant pairs do not need to be labeled (for example, different x rays of the same patient even if the diagnosis is unknown). In all cases, whether using domain labels as in IRM, data augmentation as in LISA, or counterfactual pairing as in NCM, our algorithms can be generically applied given the appropriate data constructed through expert knowledge.

\section{Practical Algorithm}\label{app-sec:algorithm}
Algorithmically, to ensure the learned model $\theta$ orthogonal to the spurious feature subspace estimated by r-truncated SVD decomposition of the noisy CF pairs, we consider reparameterization approach that projects the samples $\rvx$ onto the orthogonal complement of $\tilde{Q}_r$ (i.e., $I-\tilde{Q}_r\tilde{Q}_r^\top$) and then trains an unconstrained classifier (See \Cref{alg:NCM}). This ensures the classifier only use the invariant component of $\rvx$ to predict $\y$. This approach processes the data using the CF pairs before optimization, thus allows simple optimization approach. Other algorithm like projected gradient descent method could also be used here, and we expect it would have similar results, but we do not explore it further.
\begin{algorithm}[H]
\caption{Noisy Counterfactual-Matching}
\label{alg:NCM}
\textbf{Input:} Training Dataset $\mathcal{D}_\textnormal{train};$ pair difference matrix $\tilde{\Delta}_\rvx \in\R^{d\times k};$ truncated SVD size $r;$ epochs $T;$ step size $\eta;$ batch size $B$.
\begin{algorithmic} 
\STATE \textcolor{darkgray}{\emph{// Phase I: Find projection matrix to remove estimated spurious subspace $\tilde{Q}_r$.}}
\STATE $\tilde{Q}_r,\tilde{\Sigma}_r, \tilde{V}_r^\top=\text{TruncatedSVD}(\tilde{\Delta}_\rvx, r)$
\STATE $P=I-\tilde{Q}_r\tilde{Q}_r^\top$
\STATE \textcolor{darkgray}{\emph{// Phase II: Gradient descent with preprocessing.}}
\FOR{$t = 1,2, \dots, T$}
  \FOR{sample mini-batch $\{(\vx_i, \vy_i)\}_{i=1}^B\subset\mathcal{D}_\textnormal{train}$}
    \STATE $\theta \leftarrow \theta - \eta\nabla \frac{1}{B}\sum_{i=1}^B \ell(h(P\vx_i;\theta), \vy_i),$
  \ENDFOR
\ENDFOR
\end{algorithmic}
\textbf{Output} $\theta$
\end{algorithm}

\section{Proofs}\label{sec:proofs}
\subsection{Proof of Proposition~\ref{thm:spurious-counterfactuals-are-invariant-pairs}}\label{app-sec:connect-pairs}
\newcommand{\rvanc}{\rvz_1}
\newcommand{\vanc}{\vz_1}
\newcommand{\rvnotanc}[1][e]{\rvz_{2,#1}}
\newcommand{\vnotanc}[1][e]{\vz_{2,#1}}
\newcommand{\rvux}{\rvu_\rvx}
\newcommand{\vux}{\vu_\rvx}
\newcommand{\rvuy}{\rvu_\ry}
\newcommand{\vuy}{\vu_\ry}
\newcommand{\gx}{g_\rvx}
\newcommand{\gy}{g_\ry}
\newcommand{\hE}{h}
\renewcommand{\hbar}{\bar{h}}
\newcommand{\A}{A}
\newcommand{\Ac}{\overline{A}}
\newcommand{\B}{B}
\newcommand{\Bc}{\overline{B}}
\renewcommand{\S}{S}
\newcommand{\fa}{f_{\vux, \vanc}}
\newcommand{\eqshared}{\simeq_\mathrm{Sibling}}

The proof is based on the idea that the optimally robust classifier cannot vary w.r.t. the non-ancestors of $\ry$.
Intuitively, if it does, then there exists an (adversarial) environment that can make the objective high.
Given this invariance of the optimally robust predictor, it is simple to see that counterfactuals are indeed invariant pairs.
For theoretic clarity, we will present the lemma that the optimally robust predictor is invariant to non-ancestors of $\ry$. Then, we will prove the proposition given this lemma and finally give the proof of the lemma.

For simplicity of notation in this section, we will let $\rvanc := \rvz_{\mathrm{Anc}(\ry)}$ denote the latent $\rvz$ variables that are ancestors of $\ry$ and let $\rvnotanc := \rvz_{\mathcal{Z}\setminus\mathrm{Anc}(\ry),e}$ denote variables that are not ancestors. Note that $\rvnotanc$ depends on the environment $e$ but $\rvanc$ does not depend on $e$ by our spurious correlation latent SCM class assumption (cf. \Cref{ass:spurious-corr}).

\begin{lemma}[Optimally Robust Predictor is Invariant to Non-Ancestors]
    \label{thm:robust-predictor-is-invariant}
    Any optimally robust predictor for a spurious correlation latent SCM class must be constant w.r.t. the non-ancestors of $\ry$ almost everywhere, i.e., for almost all $\vu_\rvx$ and $\vanc$, there exists a constant $c_{\vu_\rvx,\vanc}$ w.r.t. $\rvnotanc$ such that $h^*_\mathcal{E}(g_\rvx(\vu_\rvx, \vanc, \vnotanc)) = c_{\vu_\rvx,\vanc}$ almost everywhere.
\end{lemma}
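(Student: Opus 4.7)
The plan is to prove the lemma by constructing an ``invariant competitor'' predictor $\bar h$ that depends only on the ancestors $\rvanc$, showing it weakly dominates $h^*_{\mathcal E}$ in worst-case risk, and then using strict convexity of $\ell$ to force $h^*_{\mathcal E}$ to agree with $\bar h$ on the relevant support. The argument rests on two structural facts: since $\ry=g_\ry(\rvuy,\rvanc)$ depends only on ancestors, the Bayes-optimal prediction given $\rvanc$ is environment-independent; and by \Cref{ass:spurious-corr} the joint distribution of $(\rvux,\rvanc,\rvuy)$ is shared across every $e\in\mathcal E$, so a predictor that depends only on $\rvanc$ incurs the same risk in every environment.

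First I would define $c(\vanc):=\argmin_{c'}\E_{\rvuy}[\ell(c',g_\ry(\rvuy,\vanc))]$, well-defined and unique by strict convexity, and let $\bar h$ be any predictor satisfying $\bar h\circ g_\rvx(\vux,\vanc,\vnotanc)=c(\vanc)$ for all $(\vux,\vnotanc)$. By conditioning on $\rvanc$, applying the tower rule, and using $\rvuy \perp (\rvux,\rvnotanc[e])\mid\rvanc$ together with the pointwise optimality of $c(\rvanc)$, I obtain $R_e(h^*_{\mathcal E})\ge R_e(\bar h)$ for every $e$, while $R_e(\bar h)$ does not depend on $e$; hence $\max_e R_e(\bar h)\le\max_e R_e(h^*_{\mathcal E})$ and both attain the common optimal value $V^*$. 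Picking any $e^\star\in\argmax_e R_e(h^*_{\mathcal E})$, the chain $R_{e^\star}(h^*_{\mathcal E})=V^*=R_{e^\star}(\bar h)$ forces the conditional inequality $\E_{\rvuy\mid\vanc}[\ell(h^*_{\mathcal E}(g_\rvx(\vux,\vanc,\vnotanc[e^\star])),\ry)]\ge \E_{\rvuy\mid\vanc}[\ell(c(\vanc),\ry)]$ to be an equality $P_{e^\star}$-a.s., and strict convexity uniquely pins the minimizer down as $c(\vanc)$, yielding $h^*_{\mathcal E}\circ g_\rvx(\vux,\vanc,\vnotanc[e^\star])=c(\vanc)$ for $P_{e^\star}$-a.e.\ argument, which is exactly constancy in $\vnotanc$ as required.

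The main obstacle I expect is extending the $P_{e^\star}$-a.s.\ equality, which only constrains $h^*_{\mathcal E}$ on the support of a single worst-case environment, to the a.e.-in-$\vnotanc$ statement across the full non-ancestor coordinate required by the lemma, since different $e$ induce different push-forwards on $\rvnotanc$. My plan is to rerun the same Jensen / strict-convexity argument on the mixture environment $\bar P=|\mathcal E|^{-1}\sum_e P_e$, whose push-forward of $\rvnotanc$ equals the mixture of every individual push-forward and which still inherits the spurious-correlation class structure; strict convexity on this mixture delivers constancy on the full union of supports in one shot. A secondary technicality, the measurable selection needed to define $\bar h$ as a function of $\vx$ when $g_\rvx$ is many-to-one, is sidestepped because the lemma's conclusion is phrased through the composition $h^*_{\mathcal E}\circ g_\rvx$ in exogenous/latent coordinates rather than as a statement about $h^*_{\mathcal E}$ restricted to $\vx$-space.
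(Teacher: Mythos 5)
Your proposal is correct, but it takes a genuinely different route from the paper. The paper argues adversarially: fixing $(\vu_\rvx,\vanc)$, it assumes the class $\mathcal{M}_\mathcal{E}$ is rich enough that the environment can concentrate the conditional law of $\rvnotanc$ at any point, so the inner maximum becomes a supremum over realizations of $\vnotanc$, and the only way to control a supremum of a strictly convex conditional risk is to output the constant minimizer $c_{\vu_\rvx,\vanc}$ everywhere. Your argument instead constructs the Bayes competitor $\bar h$ depending only on ancestors and exploits the sandwich $\max_e R_e(\bar h)\le\max_e R_e(h^*_\mathcal{E})$ (feasibility of $\bar h$) together with $R_e(h^*_\mathcal{E})\ge R_e(\bar h)=V^*$ for every $e$ (pointwise Bayes optimality plus environment-independence of $\bar h$'s risk); this forces equality of risks in \emph{every} environment simultaneously, so strict convexity pins $h^*_\mathcal{E}$ down on each $P_e$'s support at once --- your worry about extending beyond a single worst-case $e^\star$ dissolves, and the mixture step you propose is a valid (if slightly roundabout) way to package the same observation, since $V^*=\max_e R_e(h^*_\mathcal{E})\ge R_{\bar P}(h^*_\mathcal{E})\ge R_{\bar P}(\bar h)=V^*$. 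The trade-off between the two routes: the paper's proof needs an adversarial richness condition (``the environment can concentrate probability mass'') that is invoked mid-proof but appears in neither the lemma statement nor \Cref{def:latent_causal_model}, and in exchange it yields constancy over the entire reachable range of $\vnotanc$; your proof needs no such richness assumption and uses only \Cref{ass:spurious-corr} plus strict convexity, but concludes constancy only almost everywhere with respect to the union of the environments' supports. For the downstream use in \Cref{thm:spurious-counterfactuals-are-invariant-pairs}, where counterfactual realizations live in exactly those supports, your weaker conclusion suffices, so your version is arguably the more economical one. Both proofs share the same implicit requirement that the competitor (constant-in-$\vnotanc$) predictor is realizable as a function of $\vx$, i.e., that $g_\rvx$ does not conflate distinct ancestor values; you flag this, and the paper elides it in the same way when it treats $k(\vu_\rvx,\vanc,\cdot)$ as freely choosable.
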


\propinvpair*

\begin{proof}[Proof of \Cref{thm:spurious-counterfactuals-are-invariant-pairs}]
By the definition of spurious counterfactual pairs, we know that $\vx_e$ and $\vx_{e'}$ must come from the same exogenous noise $\rvu$. And because we assume that no causal mechanism for ancestors is intervened, this means that there exists $(\rvux, \rvanc, \rvnotanc, \rvnotanc[e'])$ such that $\gx(\rvux,\rvanc,\rvnotanc) = \vx$ and $\gx(\rvux,\rvanc,\rvnotanc[e'])=\vx'$.
Because of \Cref{thm:robust-predictor-is-invariant}, we know that the optimally robust predictor $h_\mathcal{E}^*$ must be invariant to changes in $\rvnotanc$ values.
Therefore, we have:
\begin{align}
    h_\mathcal{E}^*(\vx) = h_\mathcal{E}^*(\gx(\rvux,\rvanc,\rvnotanc)) = h_\mathcal{E}^*(\gx(\rvux,\rvanc,\rvnotanc[e']))= h_\mathcal{E}^*(\vx'),
\end{align}
where the second equality is by \Cref{thm:robust-predictor-is-invariant} and the others are by definition of a spurious counterfactual.
\end{proof}

\begin{proof}[Proof of \Cref{thm:robust-predictor-is-invariant}]
We step through a few key steps of the proof.
\paragraph{Handling non-injective $g_\rvx$}
First, we show that if $g_\rvx$ is non-injective w.r.t. the support of the original distribution $\mathbb{P}_e$, it can be written as an injective function of the support of another distribution $\tilde{\mathbb{P}}_e$, which will have the same objective value as when using $\mathbb{P}_e$. 
First, let $g^\dagger_\rvx(\vx)$ denote one pseudo inverse of $g_\rvx$ (note there could be many but we only need one here).
Then, we can define the new distribution to be equal to $\mathbb{P}_e$ except for the following:
\begin{align}
    \tilde{\mathbb{P}}_e(\rvu_\rvx, \rvz_1, \rvz_2) = \begin{cases}
        \mathbb{P}_e(\rvx=\vx), & \text{if $\exists \vx$ such that $(\rvu_\rvx, \rvz_1, \rvz_2) = g^\dagger_\rvx(\vx)$} \\
        0, & \text{otherwise}
    \end{cases} \,.
\end{align}
We now show that this is equivalent to the objective using $\mathbb{P}_e$:
\begin{align}
    \E_{(\rvx,\ry) \sim \mathbb{P}_e} [\ell(h(\rvx), \ry)] 
    &=\E_{\mathbb{P}_e(\rvx)} [ \E_{\mathbb{P}_e(\rvy|\rvx)}[\ell(h(\rvx), \ry)]]  \\
    &=\E_{\tilde{\mathbb{P}}_e(\rvu_\rvx, \rvz_1, \rvz_2)} [ \E_{\mathbb{P}_e(\rvy|\rvx)}[\ell(h(g_{\rvx}(\rvu_\rvx, \rvz_1, \rvz_2), \ry)]]  \\
    &=\E_{\tilde{\mathbb{P}}_e(\rvx)} [ \E_{\mathbb{P}_e(\rvy|\rvx)}[\ell(h(\rvx, \ry)]]  \\
    &=\E_{\tilde{\mathbb{P}}_e(\rvx,\ry)} [\ell(h(\rvx, \ry)]  \,,
\end{align}
where the second two equals are by LOTUS rules.
Thus, the rest of the proof can assume that $g_\rvx$ is injective w.r.t. the support of $\tilde{\mathbb{P}}_e$.

\paragraph{Decomposition into independent optimization problems}
Second, we show that the global min-max optimization problem can be decomposed into local min-max problems given a particular $(\rvu_\rvx, \rvz_1)$:
\begin{align}
    &\min_h \max_{e \in \mathcal{E}} \E_{(\rvx,\ry) \sim \tilde{\mathbb{P}}_e} [\ell(h(\rvx), \ry)] \\
    &=\min_h \max_{f_e:e \in \mathcal{E}} \E_{\rvu_\rvx, \rvu_\ry, \rvz_1, \rvu_2} [\ell(h(g_\rvx(\rvu_\rvx, \rvz_1, f_e(\rvu_2,  \rvz_1))), g_\ry(\rvu_\ry, \rvz_1))]  \\
    &=\min_k \max_{f_e:e \in \mathcal{E}} \E_{\rvu_\rvx, \rvu_\ry, \rvz_1, \rvu_2} [\ell(k(\rvu_\rvx, \rvz_1, f_e(\rvu_2,  \rvz_1)), g_\ry(\rvu_\ry, \rvz_1))]  \\
    &=\E_{\rvu_\rvx, \rvz_1}\left[\min_{k(\rvz_2|\rvu_\rvx, \rvz_1)} \max_{f_e(\rvu_2|\rvz_1)} \E_{\rvu_\ry, \rvu_2| \rvu_\rvx, \rvz_1} [\ell(k(\rvu_\rvx, \rvz_1, f_e(\rvu_2,  \rvz_1)), g_\ry(\rvu_\ry, \rvz_1))]\right]  \,,
\end{align}
the last step is because $k$ is injective due to $g_x$ being injective on $\tilde{p}$ so we can freely and independently choose predictions for each value of $\rvu_\rvx$ and $\rvz_1$ in the support of $\tilde{p}$.
Similarly, $f_e$ can independently and freely choose values for each value of $\rvz_1$ (it is already constant w.r.t. $\rvu_\rvx$.

\paragraph{Proving minimax solutions to independent problems are constant w.r.t. $\rvz_2$}
We will now suppress notation on $\rvu_\rvx$ and $\rvz_1$ and simply denote $k(\rvz_2)$ and $f(\rvu_2)$. 
Furthermore, we will denote $\phi(\alpha) := \E_{\rvu_\ry, \rvu_2| \rvu_\rvx, \rvz_1} [\ell(\alpha, g_\ry(\rvu_\ry, \rvz_1))]$.
Given this simplified notation, we will show that for each of these subproblems, the optimal solution to the following problem for $k$ is constant w.r.t. $\rvz_2=f_e(\rvu_2)$:
\begin{align}
    \min_k \max_{f_e} \E_{\rvu_2}[\phi(k(f_e(\rvu_2)))]\,.
\end{align}
\emph{Environment strategy:} The environment's optimal strategy is to concentrate all the mass on the worst case prediction:
\begin{align}
    \max_{f_e} \E_{\rvu_2}[\phi(k(f_e(\rvu_2)))] 
    = \sup_{\vz_2} \phi(k(\vz_2)) \,.
\end{align}
The proof of this can be seen by contradiction. If $f_e$ was optimal but varied w.r.t. $\rvu_2$, then there exists at least two measurable subsets that have different outputs. We could construct another predictor $f'_e$ by changing all the predictions to the supremum which would increase the objective, which leads to a contradiction.

\emph{Predictor's strategy:} Now we can analyze the predictor's optimal strategy given this simplification and show that the optimal strategy is constant w.r.t. $\vz_2$:
\begin{align}
    k^*(\vz_2) := \argmin_k \sup_{\vz_2} \phi(k(\vz_2)) = \argmin_k J(k) = c \,.
\end{align}
Again, the proof is by contradiction. Suppose there was a $k$ that was optimal but was not constant. Then, we can construct a new $k'$ that will have a strictly better value: $k'(\vz_2) := c = \argmin_{\alpha} \phi(\alpha)$ that is a constant, where $\alpha$ is optimized over the set of possible outputs of $k$. Because there must exist at least two distinct outputs of $k$ by assumption, then we can analyze the relation between the objectives achieved for $k$ and $k'$:
\begin{align}
    J(k) = \sup_{\vz_2} \phi(k(\vz_2)) = \sup_\alpha \phi(\alpha) > \min_\alpha \phi(\alpha) = \phi(c) = \sup_{\vz_2} \phi(c) = \sup_{\vz_2} \phi(k'(\vz_2)) = J(k') \,,
\end{align}
where the strict inequality is because $k$ was assumed to be non-constant.
This leads to the contradiction that $k$ was optimal for the minimax problem. Thus, $k^*$ must be a constant w.r.t. $\vz_2$ as stated before.
This completes the proof when combining over all values of $\rvu_\rvx$ and $\rvz_1$ in the support of $\tilde{\mathbb{P}}$.

The optimally robust classifier $h^*_{\mathcal{E}}$ is defined as the solution to:
$$ h^*_{\mathcal{E}} = \argmin_h J(h) = \argmin_h \max_{e \in \mathcal{E}} \E_{(\rvx,\ry) \sim P_e} [\ell(h(\vx), \vy)], $$
where $\ell$ is a strictly convex loss function.
Using latent variables, and letting $k(\vu_x,\vz_1,\vz_{2,e}) := h(g_x(\vu_x,\vz_1,\vz_{2,e}))$, the objective function can be expressed as:
$$ J(h) = \E_{\rvu_x, \rvz_1} \left[ \max_{e \in \mathcal{E}} \E_{\rvu_y} \left[ \E_{\rvz_{2,e} \sim P_e(\cdot|\vu_x,\vz_1)} [\ell(k(\vu_x,\vz_1,\vz_{2,e}), g_y(\vu_y,\vz_1))] \right] \right]. $$
The outer expectation $\E_{\rvu_x, \rvz_1}$ is taken because the random variables $\rvu_x$ and $\rvz_1$ (ancestors of $\ry$, $\rvz_1 = \rvz_{Anc(\ry)}$) are not affected by the choice of environment $e \in \mathcal{E}$. The term $\rvz_{2,e}$ denotes latent random variables that are non-ancestors of $\ry$, whose causal mechanisms $f_{e,i}$ (and thus their distribution $P_e(\rvz_{2,e}|\vu_x,\vz_1)$ conditioned on realizations $\vu_x, \vz_1$) can vary with $e$. The function $g_y(\vu_y,\vz_1)$ (target generation from realization $\vu_y, \vz_1$) is also invariant across environments.

To minimize $J(h)$, we need to effectively minimize the term inside the $\E_{\rvu_x, \rvz_1}[\cdot]$ for each pair of realizations $(\vu_x,\vz_1)$ independently. Let's fix $(\vu_x,\vz_1)$. Define:
$$ \phi_{\vu_x,\vz_1}(\alpha) := \E_{\rvu_y}[\ell(\alpha, g_y(\vu_y,\vz_1))]. $$
Since $\ell$ is strictly convex, $\phi_{\vu_x,\vz_1}(\alpha)$ is also strictly convex. Let $c_{\vu_x,\vz_1}$ be the unique minimizer of $\phi_{\vu_x,\vz_1}(\alpha)$:
$$ c_{\vu_x,\vz_1} := \argmin_{\alpha} \phi_{\vu_x,\vz_1}(\alpha). $$
This $c_{\vu_x,\vz_1}$ represents the optimal prediction given realizations $(\vu_x,\vz_1)$, averaging out $\rvu_y$, and it is independent of the realization $\vz_{2,e}$ and environment $e$.

For fixed realizations $(\vu_x,\vz_1)$, the problem for the predictor $h$ (which chooses $k(\vu_x,\vz_1,\cdot)$ as a function of $\vz_{2,e}$) and the environment $e$ is to determine:
$$ M(k; \vu_x,\vz_1) = \max_{e \in \mathcal{E}} \left[ \E_{\rvz_{2,e} \sim P_e(\cdot|\vu_x,\vz_1)} [\phi_{\vu_x,\vz_1}(k(\vu_x,\vz_1,\vz_{2,e}))] \right]. $$
The predictor $h$ chooses its function $k(\vu_x,\vz_1, \cdot)$ (which maps a realization $\vz_{2,e}$ to a prediction value) to minimize $M(k; \vu_x,\vz_1)$.

\begin{enumerate}
    \item \textbf{Environment's Strategy:} For any function $k(\vu_x,\vz_1, \cdot)$ chosen by $h$, the environment $e$ will choose the distribution $P_e(\rvz_{2,e}|\vu_x,\vz_1)$ to maximize $\E_{\rvz_{2,e}}[\phi_{\vu_x,\vz_1}(k(\vu_x,\vz_1,\vz_{2,e}))]$. Assuming the class of SCMs $\mathcal{M}_{\mathcal{E}}$ allows the environment to concentrate probability mass, this maximum will be:
    $$ \sup_{\vz'_{2,e}} \phi_{\vu_x,\vz_1}(k(\vu_x,\vz_1,\vz'_{2,e})). $$
    This is because the environment can choose $P_e(\rvz_{2,e}|\vu_x,\vz_1)$ to be a point mass (or a sequence of distributions approaching a point mass) at the realization $\vz'_{2,e}$ that yields the highest value for $\phi_{\vu_x,\vz_1}(k(\vu_x,\vz_1,\vz'_{2,e}))$.

    \item \textbf{Predictor's Optimal Strategy (Construction):} The predictor $h$ must choose its function $k(\vu_x,\vz_1, \cdot)$ to minimize this supremum value:
    $$ \min_{k(\vu_x,\vz_1, \cdot)} \left( \sup_{\vz'_{2,e}} \phi_{\vu_x,\vz_1}(k(\vu_x,\vz_1,\vz'_{2,e})) \right). $$
    To minimize the supremum (i.e., the worst-case value over realizations $\vz'_{2,e}$) of $\phi_{\vu_x,\vz_1}(k(\vu_x,\vz_1,\vz'_{2,e}))$, the optimal strategy for $k(\vu_x,\vz_1, \cdot)$ is to make its output constant with respect to $\vz'_{2,e}$. Let this constant be $C_{\vu_x,\vz_1}$. Then the expression becomes $\phi_{\vu_x,\vz_1}(C_{\vu_x,\vz_1})$. The predictor will then choose this constant $C_{\vu_x,\vz_1}$ to be $c_{\vu_x,\vz_1} = \text{argmin}_\alpha \phi_{\vu_x,\vz_1}(\alpha)$, because this value minimizes $\phi_{\vu_x,\vz_1}(\cdot)$.
    
    Thus, the constructed optimal strategy for $k(\vu_x,\vz_1, \cdot)$ is $k(\vu_x,\vz_1,\vz_{2,e}) = c_{\vu_x,\vz_1}$ for all realizations $\vz_{2,e}$.

    \item \textbf{Value Achieved by the Optimal Strategy:} With $k(\vu_x,\vz_1,\vz_{2,e}) = c_{\vu_x,\vz_1}$, the value $M(k; \vu_x,\vz_1)$ becomes $\phi_{\vu_x,\vz_1}(c_{\vu_x,\vz_1})$.
    If $k(\vu_x,\vz_1,\vz_{2,e})$ were any other function (i.e., not constant and equal to $c_{\vu_x,\vz_1}$ for all realizations $\vz_{2,e}$), then there would exist some realization $\vz''_{2,e}$ such that $k(\vu_x,\vz_1,\vz''_{2,e}) \neq c_{\vu_x,\vz_1}$. Let $v'' = k(\vu_x,\vz_1,\vz''_{2,e})$. Then $\phi_{\vu_x,\vz_1}(v'') > \phi_{\vu_x,\vz_1}(c_{\vu_x,\vz_1})$ due to the strict convexity of $\phi_{\vu_x,\vz_1}$ and $c_{\vu_x,\vz_1}$ being its unique minimizer. The adversarial environment would ensure that $\sup_{\vz'_{2,e}} \phi_{\vu_x,\vz_1}(k(\vu_x,\vz_1,\vz'_{2,e})) \ge \phi_{\vu_x,\vz_1}(v'') > \phi_{\vu_x,\vz_1}(c_{\vu_x,\vz_1})$.
    Thus, any strategy other than $k(\vu_x,\vz_1,\vz_{2,e}) = c_{\vu_x,\vz_1}$ results in a strictly larger value for $M(k; \vu_x,\vz_1)$.
\end{enumerate}

The overall objective $J(h)$ is $\E_{\rvu_x, \rvz_1}[M(k; \vu_x,\vz_1)]$. Since the optimal strategy for each pair of realizations $(\vu_x,\vz_1)$ is to set $k(\vu_x,\vz_1,\vz_{2,e}) = c_{\vu_x,\vz_1}$, the optimally robust predictor $h^*_{\mathcal{E}}$ must be such that its corresponding $k$-function implements this strategy.
    Therefore, by construction of the optimal strategy for the minimax problem, it must be that:
    $$ h^*_{\mathcal{E}}(g_x(\vu_x,\vz_1,\vz_{2,e})) = c_{\vu_x,\vz_1} \quad \text{almost everywhere.} $$
    This shows that $h^*_{\mathcal{E}}(g_x(\vu_x,\vz_1,\vz_{2,e}))$ is constant with respect to the realization $\vz_{2,e}$ and equal to $c_{\vu_x,\vz_1}$.
\end{proof}

\subsection{Perturbation Theory}\label{app-sec:perturbation-theory}
In this subsection, we revisit some important notions in the matrix perturbation theory.
Let $ {\Delta}_{\rvx} $ and $ \tilde{\Delta}_{\rvx} = {\Delta}_{\rvx} + \varepsilon $ be two matrices in $ \mathbb{R}^{d \times k}, $ without loss of generality, assume $ d \geq k ,$ as $d$ denotes the dimension of $\rvx$ and $k$ denote the counterfactual pair. Their SVDs are given respectively as follows.
\[
{\Delta}_{\rvx} = \sum_{i=1}^{k}  {\sigma}_i  {q}_i ( {v}_i)^\top = 
\begin{bmatrix}
 {Q}_{j} &  {Q}_{\perp, j}
\end{bmatrix}
\begin{bmatrix}
 {\Sigma} & 0 \\
0 &  {\Sigma}_\perp\\
0 & 0
\end{bmatrix}
\begin{bmatrix}
 {V}^\top_{j} \\
 {V}_{\perp, j}^\top
\end{bmatrix},
\]
\[
\tilde{\Delta}_{\rvx} = \sum_{i=1}^{k} \tilde{\sigma}_i \tilde{q}_i \tilde{v}_i^\top = 
\begin{bmatrix}
\tilde{Q}_{j} & \tilde{Q}_{\perp, j}
\end{bmatrix}
\begin{bmatrix}
\tilde{\Sigma} & 0 \\
0 & \tilde{\Sigma}_\perp\\
0 & 0
\end{bmatrix}
\begin{bmatrix}
\tilde{V}^\top_{j} \\
\tilde{V}^\top_{\perp, j}
\end{bmatrix}.
\]
Here, $ \tilde{\sigma}_1 \geq \cdots \geq \tilde{\sigma}_{k} $ (resp. $  {\sigma}_1 \geq \cdots \geq  {\sigma}_{k} $) are the singular values of $\tilde{\Delta}_{\rvx}$ (resp. $ {\Delta}_{\rvx}$) in descending order. $\tilde{u}_i$ (resp. $ {u}_i$) is the left singular vector corresponding to $ \tilde{\sigma}_i $ (resp. $  {\sigma}_i $), and $v_i$ (resp. $ {v}_i$) is the right singular vector. Define:
\begin{align}\label{eqn:singular-notation}
\tilde{\Sigma}:= \operatorname{diag}([\tilde{\sigma}_1, \ldots, \tilde{\sigma}_{j}]), \quad 
\tilde{\Sigma}_\perp := \operatorname{diag}([\tilde{\sigma}_{j+1}, \ldots, \tilde{\sigma}_{k}]),
\end{align}
\[
\tilde{Q}_{j} := [\tilde{q}_1, \ldots, \tilde{q}_{j}] \in \mathbb{R}^{d \times j}, \quad 
\tilde{Q}_\perp := [\tilde{q}_{j+1}, \ldots, \tilde{q}_{d}] \in \mathbb{R}^{d \times (d-j)},
\]
\[
\tilde{V} := [\tilde{v}_1, \ldots, \tilde{v}_{j}] \in \mathbb{R}^{k \times j}, \quad 
\tilde{V}_\perp := [\tilde{v}_{j+1}, \ldots, \tilde{v}_{k}] \in \mathbb{R}^{k \times (k-j)}.
\]
The matrices $  {\Sigma},  {\Sigma}_\perp,  {Q},  {Q}_\perp,  {V},  {V}_\perp $ are defined analogously.

\citet{wedin1972perturbation} developed a perturbation bound for singular subspaces that parallels the Davis-Kahan $\sin \Theta$ theorem for eigenspaces. The Lemma below provides bounds on the perturbation of the left and right singular subspaces. 
\begin{lemma}[Wedin's $\sin \Theta$  theorem]\label{Wedin's}
Consider the setup in \Cref{app-sec:perturbation-theory}. If $ \sigma_1(\varepsilon) < (1-1/\sqrt{2})( {\sigma}_{j} -  {\sigma}_{j+1}) $, then
\begin{align*}
    \max \left\{ \operatorname{dist}(\tilde{Q}_{j},  {Q}_{j}), \operatorname{dist}(\tilde{V}_{j},  {V}_{j}) \right\} &\leq \frac{2\|\varepsilon\|}{ {\sigma}_{j} -  {\sigma}_{j+1}},\\
    \max \left\{ \operatorname{dist}_F(\tilde{Q}_{j},  {Q}_{j}), \operatorname{dist}_F(\tilde{V}_{j},  {V}_{j}) \right\} &\leq \frac{2\sqrt{j}\|\varepsilon\|}{ {\sigma}_{j} - {\sigma}_{j+1}}.
\end{align*}
\end{lemma}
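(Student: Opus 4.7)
My plan is to follow the classical singular-subspace perturbation proof and then convert the conclusion about principal angles into the two subspace distances stated in the lemma. First, I would reduce the claim to a bound on the principal-angle sines. Using the definitions in \Cref{app-sec:perturbation-theory}, a standard identity for equidimensional subspaces gives
\begin{equation*}
\mathrm{dist}(\tilde Q_j, Q_j) \;=\; \|\tilde Q_j\tilde Q_j^\top - Q_jQ_j^\top\| \;=\; \|\tilde Q_{\perp,j}^\top Q_j\| \;=\; \|\sin\Theta_Q\|,
\end{equation*}
and analogously $\mathrm{dist}(\tilde V_j, V_j) = \|\sin\Theta_V\|$, where $\Theta_Q,\Theta_V$ are the diagonal matrices of principal angles between the respective $j$-dimensional subspaces. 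For the Frobenius version, I would use the general inequality $\|\sin\Theta\|_F \leq \sqrt{j}\,\|\sin\Theta\|$ (there are at most $j$ nonzero principal angles), which transfers any spectral-norm bound into a Frobenius bound at the cost of a $\sqrt{j}$ factor.

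Next I would derive the core perturbation identity that drives Wedin's argument. Since $\tilde\Delta_\rvx \tilde V_j = \tilde Q_j\tilde \Sigma$ from the SVD of $\tilde\Delta_\rvx$, left-multiplication by $\tilde Q_{\perp,j}^\top$ annihilates the left factor, giving $\tilde Q_{\perp,j}^\top \tilde\Delta_\rvx \tilde V_j = 0$. Substituting $\tilde\Delta_\rvx = \Delta_\rvx + \varepsilon$ and expanding $\Delta_\rvx$ via its SVD yields
\begin{equation*}
\tilde Q_{\perp,j}^\top Q_j \Sigma\, V_j^\top \tilde V_j + \tilde Q_{\perp,j}^\top Q_{\perp,j}\Sigma_\perp V_{\perp,j}^\top \tilde V_j \;=\; -\tilde Q_{\perp,j}^\top \varepsilon\,\tilde V_j.
\end{equation*}
A symmetric identity obtained by right-multiplying the relation $\tilde\Delta_\rvx^\top \tilde Q_j = \tilde V_j\tilde\Sigma$ on the left by $\tilde V_{\perp,j}^\top$ gives the analogous expression bounding $V$-side angles. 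Combining both identities and taking spectral norms (using $\|\Sigma V_j^\top \tilde V_j\|\geq \tilde\sigma_j\|\sin\Theta_V\|$-type lower bounds and $\|\Sigma_\perp V_{\perp,j}^\top \tilde V_j\|\leq \sigma_{j+1}\|\sin\Theta_V\|$) produces the unilateral Wedin spectral inequality
\begin{equation*}
\max\bigl\{\|\sin\Theta_Q\|,\|\sin\Theta_V\|\bigr\} \;\leq\; \frac{\sqrt{2}\,\|\varepsilon\|}{\tilde\sigma_j - \sigma_{j+1}},
\end{equation*}
assuming $\tilde\sigma_j > \sigma_{j+1}$.

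Finally, I would convert the clean-side gap condition in the hypothesis into a bound in terms of $\sigma_j - \sigma_{j+1}$. By Weyl's inequality, $\tilde\sigma_j \geq \sigma_j - \|\varepsilon\|$, so $\tilde\sigma_j - \sigma_{j+1} \geq (\sigma_j-\sigma_{j+1}) - \|\varepsilon\|$. The hypothesis $\|\varepsilon\| \leq (1-1/\sqrt 2)(\sigma_j-\sigma_{j+1})$ then gives $\tilde\sigma_j - \sigma_{j+1} \geq (1/\sqrt 2)(\sigma_j-\sigma_{j+1}) > 0$, which both validates the nondegeneracy needed for Wedin and upgrades the $\sqrt 2$ into the stated factor of $2$:
\begin{equation*}
\max\bigl\{\|\sin\Theta_Q\|,\|\sin\Theta_V\|\bigr\} \;\leq\; \frac{\sqrt 2 \cdot \sqrt 2\,\|\varepsilon\|}{\sigma_j - \sigma_{j+1}} \;=\; \frac{2\|\varepsilon\|}{\sigma_j - \sigma_{j+1}}.
\end{equation*}
The spectral-norm conclusion of the lemma follows immediately from the identity in Step 1, and the Frobenius conclusion from the $\sqrt j$ estimate.

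\paragraph{Expected main obstacle.} The bookkeeping in the second step is the most technical part: one must carefully handle the interplay between the two SVDs, extract the matrix equation in the form $A\sin\Theta - \sin\Theta\, B = E$ (a Sylvester-type structure), and apply the norm bound $\|\sin\Theta\| \leq \|E\|/\mathrm{gap}$ with the correct scalar constants so that the $\sqrt 2$ rather than $1$ appears naturally, which is essential for the final factor of $2$ after combining with Weyl. Everything else is routine; no additional results beyond Weyl's inequality and the elementary sine-principal-angle identities are needed.
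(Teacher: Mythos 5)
The paper does not actually prove \Cref{Wedin's}: it is imported verbatim from the literature (\citet{wedin1972perturbation}; the specific formulation with the gap condition $\sigma_1(\varepsilon) < (1-1/\sqrt{2})(\sigma_j - \sigma_{j+1})$ and the constant $2$ is Theorem 2.9 of \citet{chen2021spectral}). Your attempt to derive it from scratch follows the standard textbook argument, and the skeleton is sound: reduce $\operatorname{dist}$ to $\|\sin\Theta\|$ via the projector identity, extract the perturbation identity by annihilating one side of the SVD of $\tilde{\Delta}_\rvx$, obtain a bound with denominator $\tilde{\sigma}_j - \sigma_{j+1}$, and then use Weyl's inequality plus the hypothesis to convert $\tilde{\sigma}_j - \sigma_{j+1} \geq (1/\sqrt{2})(\sigma_j - \sigma_{j+1})$, which is exactly where the factor $2 = \sqrt{2}\cdot\sqrt{2}$ comes from. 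Two points in the middle are loose as written, though neither is fatal. First, the inequality you quote, $\|\Sigma V_j^\top \tilde V_j\| \geq \tilde{\sigma}_j \|\sin\Theta_V\|$, is not the correct form: with your choice of annihilating by $\tilde Q_{\perp,j}^\top$ on the dirty side, the leading term involves $\tilde Q_{\perp,j}^\top Q_j \Sigma V_j^\top \tilde V_j$ and the lower bound picks up a $\cos\Theta_V$ factor, which complicates the bookkeeping; the cleaner route is to left-multiply $(\Delta_\rvx + \varepsilon)\tilde V_j = \tilde Q_j \tilde\Sigma$ by the \emph{clean} $Q_{\perp,j}^\top$, yielding $Q_{\perp,j}^\top \tilde Q_j \tilde\Sigma = \Sigma_\perp V_{\perp,j}^\top \tilde V_j + Q_{\perp,j}^\top \varepsilon \tilde V_j$ and its mirror image, from which the coupled system in $\|\sin\Theta_Q\|$ and $\|\sin\Theta_V\|$ closes immediately. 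Second, if $\operatorname{dist}_F$ denotes the Frobenius norm of the projector difference (the analogue of the paper's spectral-norm definition), then $\operatorname{dist}_F = \sqrt{2}\,\|\sin\Theta\|_F$, so converting the spectral bound via $\|\sin\Theta\|_F \leq \sqrt{j}\,\|\sin\Theta\|$ costs an extra $\sqrt{2}$; the stated $\sqrt{j}$ constant is instead obtained by running the same argument directly in Frobenius norm and using $\|\varepsilon \tilde V_j\|_F \leq \sqrt{j}\,\|\varepsilon\|$. In short, your derivation is a legitimate self-contained proof of a result the paper only cites, and needs only these bookkeeping repairs to be airtight.
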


\subsection{Proof of Theorem~\ref{thm:test-domain-error-ncm}}
\label{Proof_of_thm:thm:logistic_loss}
We will prove the result for linear regression as it is the simplest to understand. Then, we will prove for logistic regression. And finally, we will prove the extra bound on the spurious misalignment term.
\begin{proof}[Proof for linear regression part of \Cref{thm:test-domain-error-ncm}]
Notice that by \Cref{ass:spurious-corr}, we have $\ry_e=\ry_{e'}, \forall e, e'$, i.e., the environment does not affect the target values so we can write this as $\ry$.
Therefore, we first decompose the objective by inflating by the counterfactuals $\rvx_{e^{+}\rightarrow e}$:

\begin{align*}
&\E_{p(\rvx_{e^{+}})}[\ell (\theta^\top\rvx_{e^{+}}, \ry_{e^+})] \\
&={\textstyle \sum_{e \in \mathcal{E}_\mathrm{train}}} \mathbb{P}(e)\E_{p(\rvx_{e^{+}}, \rvx_{e^+\rightarrow e})}[\|\theta^{\top}( \rvx_{e^{+}} + \rvx_{e^{+}\rightarrow e} - \rvx_{e^{+}\rightarrow e})- \ry\|_2^2]\label{eqn:insert-x}\\
&={\textstyle \sum_{e \in \mathcal{E}_\mathrm{train}}} \mathbb{P}(e)\E_{u}[\|\theta^{\top}( \rvx_{e^{+}} +\rvx_{e^{+}\rightarrow e}-\rvx_{e^{+}\rightarrow e})- \ry\|_2^2]\notag\\
      &\leq {\textstyle \sum_{e \in \mathcal{E}_\mathrm{train}}} \mathbb{P}(e)(2\E_{u}[\|\theta^{\top}( \rvx_{e^{+}} -\rvx_{e^{+}\rightarrow e})\|_2^2+2\E_{u}[\|\theta^{\top}\rvx_{e^{+}\rightarrow e} - \ry\|_2^2])\notag\\
&= 2{\textstyle \sum_{e \in \mathcal{E}_\mathrm{train}}} \mathbb{P}(e)\E_{u}[\|\theta^{\top}( \rvx_{e^{+}} -\rvx_{e^{+}\rightarrow e})\|_2^2 + 2{\textstyle \sum_{e \in \mathcal{E}_\mathrm{train}}} \mathbb{P}(e)\E_{p(\rvx_{e}, \ry)}[\|\theta^{\top}\rvx_{e} - \ry\|_2^2] \\
     &= 2{\textstyle \sum_{e \in \mathcal{E}_\mathrm{train}}} \mathbb{P}(e)\E_{u}[\|\theta^{\top}( \rvx_{e^{+}} -\rvx_{e^{+}\rightarrow e})\|_2^2 + 2\E_{(\rvx,\ry)\sim \mathbb{P}_\mathrm{train}}[\|\theta^{\top}\rvx - \ry\|_2^2],
\end{align*}
where $\rvx_{e^{+}\rightarrow e}$ follows the training domain distribution $p(\rvx_e)$ and $(\rvx_{e^+}, \rvx_{e^{+}\rightarrow e})$ is a conceptual counterfactual pair.
Notice the second term is the training loss. 
Furthermore, we note that:
\begin{align*}
    &{\textstyle \sum_{e \in \mathcal{E}_\mathrm{train}}} \mathbb{P}(e)\E_\vu[\lVert \theta^\top(\rvx_{e^{+}}-\rvx_{e^{+}\rightarrow e})\rVert_2^2]  \\
    &= \theta^\top {\textstyle \sum_{e \in \mathcal{E}_\mathrm{train}}} \mathbb{P}(e)\E_\vu[(\rvx_{e^{+}} -\rvx_{e^+\rightarrow e})(\rvx_{e^{+}} -\rvx_{e^+\rightarrow e})^{\top}] \theta  \\
    &= \theta^\top \Me \theta \,.
\end{align*}
Given this, we can simplify this term based on the NCM constraint as follows:
\begin{align*}
\theta^\top \Me \theta 
&\overset{\eqref{eqn:estimated-NCM}}{=} \theta^\top (I - \tilde{Q}_r \tilde{Q}_r^\top) \Me (I - \tilde{Q}_r \tilde{Q}_r^\top)^\top \theta \\
&\overset{\text{(a)}}{=} \theta^\top (I - \tilde{Q}_r \tilde{Q}_r^\top) Q_{|\mathcal{I}(\mathcal{F_E})|} \Lambda_{|\mathcal{I}(\mathcal{F_E})|} Q_{|\mathcal{I}(\mathcal{F_E})|}^{\top}  (I - \tilde{Q}_r \tilde{Q}_r^\top)^\top \theta \\
&=\bigg\|\theta^{\top}\underbrace{\tilde{Q}_{r,\perp}}_{\mathbb{R}^{d\times (d-r)}} \underbrace{\tilde{Q}_{r,\perp}^\top}_{\mathbb{R}^{(d-r)\times d }} \underbrace{Q_{|\mathcal{I}(\mathcal{F_E})|}}_{\mathbb{R}^{d\times|\mathcal{I}(\mathcal{F_E})|}}\sqrt{\Lambda_{|\mathcal{I}(\mathcal{F_E})|}}\bigg\|^2 \\ &=\bigg\|\theta^{\top}\tilde{Q}_{r,\perp}\tilde{Q}_{r,\perp}^\top{Q_{|\mathcal{I}(\mathcal{F_E})|}}\sqrt{\Lambda_{|\mathcal{I}(\mathcal{F_E})|}}\bigg\|^2\\
&\overset{\text{(b)}}{=} \bigg\|\theta^{\top}\tilde{Q}_{r,\perp}\tilde{Q}_{r,\perp}^\top{Q_{|\mathcal{I}(\mathcal{F_E})|}}\bigg\|_{\Lambda_{|\mathcal{I}(\mathcal{F_E})|}}^2 \\
&= \|\theta\|^2\big\|\tilde{Q}_{r,\perp}^\top{Q_{|\mathcal{I}(\mathcal{F_E})|}}\big\|_{\Lambda_{|\mathcal{I}(\mathcal{F_E})|}}^2 \,,
\end{align*}
where in (a), we use eigendecomposition of $\Me$ which $Q_{|\mathcal{I}(\mathcal{F_E})|}\in\mathbb{R}^{d\times |\mathcal{I}(\mathcal{F_E})|},$ and $\Lambda_{|\mathcal{I}(\mathcal{F_E})|}\in\mathbb{R}^{|\mathcal{I}(\mathcal{F_E})|\times |\mathcal{I}(\mathcal{F_E})|}$ are corresponding eigenvectors and eigenvalues diagonal matrix and in (b), we used the definition of Mahalanobis-induced spectral norm. 
\end{proof}

\begin{proof}[Proof for logistic regression part of \Cref{thm:test-domain-error-ncm}]
For logistic regression where $\ry \in \{-1,1\}$, we have a similar decomposition as in linear regression.
First, we note that the log loss has a Lipschitz constant of 1 and thus we have the following:
\begin{align}
    |\ell(\theta^\top \rvx, \ry) - \ell(\theta^\top \rvx', \ry)| 
    &\leq \|\theta^\top \rvx - \theta^\top \rvx'\| = \|\theta^\top (\rvx - \rvx')\| \\
    \Leftrightarrow 
    \ell(\theta^\top \rvx, \ry) 
    &\leq \ell(\theta^\top \rvx', \ry) + \|\theta^\top (\rvx - \rvx')\| \,.
\end{align}
Given this we can easily get our initial result:
\begin{align}
&\E_{\rvx_{e^{+}},\ry_{e^+}}[\ell(\theta^\top \rvx_{e^{+}}, \ry_{e^+})] \\
&\leq {\textstyle \sum_{e \in \mathcal{E}_\mathrm{train}}} \mathbb{P}(e)\E[\ell(\theta^\top \rvx_{e^+\to e}, \ry) + \|\theta^\top (\rvx_{e^{+}} - \rvx_{e^+\to e})\|] \\
&= {\textstyle \sum_{e \in \mathcal{E}_\mathrm{train}}} \mathbb{P}(e)\E[\ell(\theta^\top \rvx_{e}, \ry)] + {\textstyle \sum_{e \in \mathcal{E}_\mathrm{train}}} \mathbb{P}(e)\E[\|\theta^\top (\rvx_{e^{+}} - \rvx_{e^+\to e})\|] \\
&= \E_{(\rvx,\ry)\sim\mathbb{P}_\mathrm{train}}[\ell(\theta^\top \rvx, \ry)] + {\textstyle \sum_{e \in \mathcal{E}_\mathrm{train}}} \mathbb{P}(e) \E[\|\theta^\top (\rvx_{e^{+}} - \rvx_{e^+\to e})\|] \,,
\end{align}
where we use the fact that $\rvx_{e^+ \to e}$ has the same distribution as $\rvx_e$.
Now we bound the second term as follows:
\begin{align}
    &{\textstyle \sum_{e \in \mathcal{E}_\mathrm{train}}} \mathbb{P}(e) \E[\|\theta^\top (\rvx_{e^{+}} - \rvx_{e^+\to e})\|] \\
    &={\textstyle \sum_{e \in \mathcal{E}_\mathrm{train}}} \mathbb{P}(e) \E[\sqrt{\|\theta^\top (\rvx_{e^{+}} - \rvx_{e^+\to e})\|^2}] \\
    &\leq \sqrt{ {\textstyle \sum_{e \in \mathcal{E}_\mathrm{train}}} \mathbb{P}(e) \E[\|\theta^\top (\rvx_{e^{+}} - \rvx_{e^+\to e})\|^2]} \label{eqn:jensen-inequality-sqrt} \\
    &= \sqrt{ \theta^\top \big( {\textstyle \sum_{e \in \mathcal{E}_\mathrm{train}}} \mathbb{P}(e) \E[(\rvx_{e^{+}} - \rvx_{e^+\to e})(\rvx_{e^{+}} - \rvx_{e^+\to e})^\top] \big) \theta}  \\
    &= \sqrt{ \theta^\top \Me \theta}  \\
    &\leq \sqrt{ \|\theta\|^2\big\|\tilde{Q}_{r,\perp}^\top{Q_{|\mathcal{I}(\mathcal{F_E})|}}\big\|_{\Lambda_{|\mathcal{I}(\mathcal{F_E})|}}^2} \label{eqn:Me-term-bounded} \\
    &= \|\theta\|\big\|\tilde{Q}_{r,\perp}^\top{Q_{|\mathcal{I}(\mathcal{F_E})|}}\big\|_{\Lambda_{|\mathcal{I}(\mathcal{F_E})|}} \,,
\end{align}
where \eqref{eqn:jensen-inequality-sqrt} is by Jensen's inequality and \eqref{eqn:Me-term-bounded} is by using the same logic as in the linear regression case for this term.
Combining the above derivations, we get the following:
\begin{align}
&\E_{\rvx_{e^{+}},\ry_{e^+}}[\ell(\theta^\top \rvx_{e^{+}}, \ry_{e^+})] \\
&\leq \E_{(\rvx,\ry)\sim\mathbb{P}_\mathrm{train}}[\ell(\theta^\top \rvx, \ry)] + {\textstyle \sum_{e \in \mathcal{E}_\mathrm{train}}} \mathbb{P}(e) \E[\|\theta^\top (\rvx_{e^{+}} - \rvx_{e^+\to e})\|] \\
&\leq \E_{(\rvx,\ry)\sim\mathbb{P}_\mathrm{train}}[\ell(\theta^\top \rvx, \ry)] + \sqrt{\|\theta\|^2\big\|\tilde{Q}_{r,\perp}^\top{Q_{|\mathcal{I}(\mathcal{F_E})|}}\big\|_{\Lambda_{|\mathcal{I}(\mathcal{F_E})|}}^2} \\
&= \E_{(\rvx,\ry)\sim\mathbb{P}_\mathrm{train}}[\ell(\theta^\top \rvx, \ry)] + \|\theta\|\big\|\tilde{Q}_{r,\perp}^\top{Q_{|\mathcal{I}(\mathcal{F_E})|}}\big\|_{\Lambda_{|\mathcal{I}(\mathcal{F_E})|}} \,.
\end{align}

\end{proof}

\begin{proof}[Proof of bound on the orthonormal term]
In this proof, we seek to prove the following:
\begin{align}
&\left\|\tilde{Q}_{r,\perp}^\top Q_{|\mathcal{I}(\mathcal{F_E})|}\right\|^2_{{\Lambda_{|\mathcal{I}(\mathcal{F_E})|}}}\leq \lambda_1(e^{+})\mathrm{dist}^2(\tilde{Q}_{s}, {Q}_{s})+\lambda_{s+1}(e^{+})\,.
\end{align}
We will consider two cases to bound the result depending on whether $r$ is greater or less than $|\mathcal{I(F_E)}|$.

\textbf{Case I}: $r < |\mathcal{I(F_E)}|$. Given this case, we can decompose as follows:
\begin{align}
    \left\|\tilde{Q}_{r,\perp}^\top Q_{|\mathcal{I}(\mathcal{F_E})|}\right\|^2_{{\Lambda_{|\mathcal{I}(\mathcal{F_E})|}}}
    &=\left\|\tilde{Q}_{r,\perp}^\top Q_{|\mathcal{I}(\mathcal{F_E})|} \Lambda_{|\mathcal{I}(\mathcal{F_E})|}^{\frac{1}{2}}\right\|^2 \\
    &=\left\|\tilde{Q}_{r,\perp}^\top Q_{r} \Lambda_{r}^{\frac{1}{2}}\right\|^2 + \left\|\tilde{Q}_{r,\perp}^\top Q_{r+1:|\mathcal{I}(\mathcal{F_E})|} \Lambda_{r+1:|\mathcal{I}(\mathcal{F_E})|}^{\frac{1}{2}}\right\|^2 \\
    &\leq \lambda_1(e^+)\left\|\tilde{Q}_{r,\perp}^\top Q_{r} \right\|^2 + \lambda_{r+1}(e^+) \left\|\tilde{Q}_{r,\perp}^\top Q_{r+1:|\mathcal{I}(\mathcal{F_E})|} \right\|^2 \label{eqn:by-norm-property}\\
    &\leq \lambda_1(e^+)\left\|\tilde{Q}_{r,\perp}^\top Q_{r} \right\|^2 + \lambda_{r+1}(e^+) \label{eqn:orthogonal-less-than-1}\\
    &\leq \lambda_1(e^+)\left\|\tilde{Q}_{r}\tilde{Q}_{r}^\top - Q_{r} Q_{r}^\top \right\|^2 + \lambda_{r+1}(e^+) \label{eqn:by-chen-et-al}\\
    &= \lambda_1(e^+)\mathrm{dist}^2(\tilde{Q}_{r}, Q_{r}) + \lambda_{r+1}(e^+) \label{eqn:by-dist-definition}\,,
\end{align}
where \eqref{eqn:by-norm-property} is by the properties of norms, \eqref{eqn:orthogonal-less-than-1} is by the fact that $\|QQ'\|$ for any orthogonal matrices is always less than 1, \eqref{eqn:by-chen-et-al} is from \citet{chen2021spectral}, and \eqref{eqn:by-dist-definition} is by the definition of the $\mathrm{dist}$ function between two subspaces.

\textbf{Case II}: $r \geq |\mathcal{I(F_E)}|$. Given this condition, we can get a simpler case without the second term:
\begin{align}
    \left\|\tilde{Q}_{r,\perp}^\top Q_{|\mathcal{I}(\mathcal{F_E})|}\right\|^2_{{\Lambda_{|\mathcal{I}(\mathcal{F_E})|}}} 
    &=\left\|\tilde{Q}_{r,\perp}^\top Q_{|\mathcal{I}(\mathcal{F_E})|} \Lambda_{|\mathcal{I}(\mathcal{F_E})|}^{\frac{1}{2}}\right\|^2 \\
    &=\left\|\tilde{Q}_{|\mathcal{I}(\mathcal{F_E})|,\perp}^\top Q_{|\mathcal{I}(\mathcal{F_E})|} \Lambda_{|\mathcal{I}(\mathcal{F_E})|}^{\frac{1}{2}}\right\|^2 \label{eqn:by-case-condition} \\
    &\leq \lambda_1(e^+)\left\|\tilde{Q}_{|\mathcal{I}(\mathcal{F_E})|,\perp}^\top Q_{|\mathcal{I}(\mathcal{F_E})|} \right\|^2 \\
    &\leq \lambda_1(e^+) \|\tilde{Q}_{|\mathcal{I}(\mathcal{F_E})|}\tilde{Q}_{|\mathcal{I}(\mathcal{F_E})|}^\top - Q_{|\mathcal{I}(\mathcal{F_E})|} Q_{|\mathcal{I}(\mathcal{F_E})|}^\top\|^2 \\
    &= \lambda_1(e^+) \mathrm{dist}^2(\tilde{Q}_{|\mathcal{I}(\mathcal{F_E})|}, Q_{|\mathcal{I}(\mathcal{F_E})|}) \,,
\end{align}
where \eqref{eqn:by-case-condition} is by the condition of Case II and the others follow similarly from Case I.
Now we can combine both of these cases to form a bound based on $s \coloneq \min \{r, |\mathcal{I}(\mathcal{F_E})| \}$ to yield the final result:
\begin{align}
&\left\|\tilde{Q}_{r,\perp}^\top Q_{|\mathcal{I}(\mathcal{F_E})|}\right\|^2_{{\Lambda_{|\mathcal{I}(\mathcal{F_E})|}}}\leq \lambda_1(e^{+})\mathrm{dist}^2(\tilde{Q}_{s}, {Q}_{s})+\lambda_{s+1}(e^{+})\,. \notag
\end{align}
\end{proof}

\subsection{Proof of Test-Domain Bounds in Terms of Counterfactual Noise}

\begin{proof}[Proof of 
\Cref{thm:log_loss-cor}]
First, by the rank condition on the corresponding clean counterfactuals, we know that the clean counterfactual matrix $\Delta_\rvx$ must only span the spurious subspace.
Thus, the eigenvectors of $\Delta_\rvx \Delta_\rvx^\top$ (denoted by $Q_{\Delta_\rvx}$ are equivalent to the left singular vectors of $\Me$, i.e., $Q \equiv Q_{\Me} = Q_{\Delta_\rvx}$.

Based on this, we can derive the result where we let $s\coloneqq \min(r, |\mathcal{I}(\mathcal{F_E})|)$:
\begin{align*}
    \mathrm{dist}(\tilde{Q}_{s}, {Q}_{s}) 
    &=\mathrm{dist}(\tilde{Q}_{\tilde{\Delta}_\rvx, s}, {Q}_{\Delta_\rvx,s})\leq \frac{2\|\varepsilon\|}{\sigma_s - \sigma_{s+1}} \,,
\end{align*}
where the first is by the definition of $\tilde{Q}$ and the fact above, and the inequality is by Wedin's theorem (\Cref{Wedin's}).
Combining this with the original results in \Cref{thm:test-domain-error-ncm}, we arrive at the bound.
\end{proof}

\subsection{Perfect Counterfactual Case}\label{proof_of_thm:optimal}
We now consider the noiseless counterfactual case.
We can easily prove a corollary that the test-domain error is equal to the train domain error if the counterfactuals are diverse enough (i.e., they satisfy the rank condition of \Cref{thm:log_loss-cor}).
\begin{corollary}
\label{thm:optimal}
    Instate the setting from \Cref{thm:log_loss-cor}. Further, assume that $r \geq |I(\mathcal{F_E})|$ and the noise is zero, i.e., $\varepsilon=0$, then we can derive that the test-domain error for any $\theta$ satisfying the NCM constraint is equal to the training error for both logistic and linear regression:
    \begin{align}
        \E_{(\rvx_{e^+},\ry_{e^+}) \sim \mathbb{P}_\textnormal{test}}\left[\ell(\theta^\top \rvx_{e^{+}},\ry_{e^+})\right]       
        =\E_{(\rvx,\ry) \sim \mathbb{P}_\textnormal{train}}\left[\ell(\theta^\top \rvx,\ry)\right] \,. 
    \end{align}
\end{corollary}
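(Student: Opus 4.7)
The plan is to strengthen the inequality in \Cref{thm:test-domain-error-ncm} to an equality by showing that under the stated conditions, the spurious subspace misalignment term vanishes pointwise rather than merely in the bound. Concretely, I will argue that the NCM constraint forces $\theta^\top(\rvx_{e^+} - \rvx_{e^+\to e}) = 0$ almost surely for every training domain $e$, which collapses the counterfactual decomposition used in the proof of \Cref{thm:test-domain-error-ncm}.

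First, I would observe that when $\varepsilon=0$, the noisy difference matrix coincides with the clean one, $\tilde{\Delta}_\rvx = \Delta_\rvx$. Under the rank condition inherited from \Cref{thm:log_loss-cor}, $\Delta_\rvx$ has rank $|\mathcal{I}(\mathcal{F_E})|$ and its column space equals the spurious subspace $\mathrm{span}(Q_{|\mathcal{I}(\mathcal{F_E})|})$. Taking $r \ge |\mathcal{I}(\mathcal{F_E})|$, the first $r$ left singular vectors $\tilde{Q}_r$ span a subspace containing $\mathrm{span}(Q_{|\mathcal{I}(\mathcal{F_E})|})$, so the NCM constraint $\theta^\top \tilde{Q}_r = 0$ implies $\theta^\top Q_{|\mathcal{I}(\mathcal{F_E})|} = 0$. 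Next, by the linearity of the SCM class and \Cref{ass:spurious-corr}, the random difference $\rvx_{e^+} - \rvx_{e^+\to e}$ lies almost surely in $\mathrm{span}(Q_{|\mathcal{I}(\mathcal{F_E})|})$, since this subspace is by construction the column space of $\Me$. Combining these two facts gives $\theta^\top(\rvx_{e^+} - \rvx_{e^+\to e}) = 0$ a.s., hence $\theta^\top \rvx_{e^+} = \theta^\top \rvx_{e^+\to e}$ a.s.

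Finally, I would conclude by noting that $\rvx_{e^+\to e}$ has the same distribution as $\rvx_e$ by construction (shared exogenous noise $\rvu$ applied to $\mathcal{M}_e$), and $\ry_{e^+} = \ry$ by \Cref{ass:spurious-corr}. Therefore, for both the log loss and the squared error,
\begin{align*}
\E_{(\rvx_{e^+},\ry_{e^+})}[\ell(\theta^\top \rvx_{e^+}, \ry_{e^+})]
= \textstyle\sum_{e\in \mathcal{E}_\mathrm{train}} \mathbb{P}(e)\,\E_\vu[\ell(\theta^\top \rvx_{e^+\to e}, \ry)]
= \E_{(\rvx,\ry)\sim \mathbb{P}_\mathrm{train}}[\ell(\theta^\top \rvx, \ry)].
\end{align*}
The first equality uses the a.s.\ identity just derived together with the fact that mixing over training environments reconstructs $\mathbb{P}_\mathrm{train}$; the second uses the distributional equivalence between $\rvx_{e^+\to e}$ and $\rvx_e$.

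The only delicate point I anticipate is the containment of subspaces when $r > |\mathcal{I}(\mathcal{F_E})|$: the extra singular vectors of $\tilde{\Delta}_\rvx$ correspond to zero singular values, so they are not uniquely defined, but the constraint $\theta^\top \tilde{Q}_r = 0$ is stronger than $\theta^\top Q_{|\mathcal{I}(\mathcal{F_E})|} = 0$ and remains feasible. I would make this precise by arguing at the level of spans: $\mathrm{span}(Q_{|\mathcal{I}(\mathcal{F_E})|}) \subseteq \mathrm{span}(\tilde{Q}_r)$ whenever $r \ge |\mathcal{I}(\mathcal{F_E})|$ and $\varepsilon = 0$, which is all that is needed for the orthogonality conclusion.
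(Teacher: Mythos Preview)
Your proposal is correct and follows essentially the same idea as the paper's proof: both hinge on the observation that, when $\varepsilon=0$ and $r\ge|\mathcal{I}(\mathcal{F_E})|$, the spurious difference $\theta^\top(\rvx_{e^+}-\rvx_{e^+\to e})$ vanishes pointwise, so the counterfactual decomposition in \Cref{thm:test-domain-error-ncm} collapses to an exact identity rather than an inequality. The paper's version is terser---it simply invokes the bound of \Cref{thm:log_loss-cor} with the extra terms set to zero and then remarks that ``the inequality for squared error that introduces the 2 can be removed for perfect counterfactuals because the term is 0 and doesn't need to use bounds''---while you spell out the subspace containment and the a.s.\ equality $\theta^\top\rvx_{e^+}=\theta^\top\rvx_{e^+\to e}$ explicitly, which has the advantage of yielding the claimed \emph{equality} (not just $\le$) uniformly for both losses.
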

\begin{proof}
    First, we note that the inequality for squared error that introduces the 2 can be removed for perfect counterfactuals because the term is 0 and doesn't need to use bounds.
    Other than that, we can simply apply the result from \Cref{thm:log_loss-cor} and note that $\lambda_{s+1}(e^+)$ will inherently be 0 due to the $r \geq |I(\mathcal{F_E})|$ assumption and the $\|\varepsilon\|=0$ by assumption as well.
\end{proof}

We validate this result using the synthetic dataset in \Cref{sec:empirical} (see \Cref{fig:synthetic-fewshot}) with $\varepsilon = 0$, showing that when $k \geq |\mathcal{I}(\mathcal{F_E})|$, the model achieves optimal performance.

The following comments are in order. 
{\bf 1)} Simple CF pair-matching \eqref{eqn:objective-few-shot-simple} provably generalizes to new test domains. If we run an algorithm $\mathcal{A}$ to solve \eqref{eqn:objective-few-shot-simple}, and if it can perform well on the training domain, then, it will also perform well on the test domain. 
{\bf 2)} A linear number of oracle counterfactual pairs is sufficient to achieve domain generalization. By assuming that the differences are linearly independent, the required number of counterfactual pairs is bounded by $k \geq |\mathcal{I}(\mathcal{F_E})|$. This implies that each counterfactual pair effectively removes one spurious dimension. Even when the data dimension $d$ is high in reality, by the sparse mechanism shift hypothesis, the spurious mechanism shift \citep{scholkopf2021toward} suggests that the spurious feature space is indeed low, thereby supporting the effectiveness of our proposed method.

A another case arises when $e^+$ chosen to be sampled from mixture of training domains. In this case, the spurious subspace misalignment vanishes as the test domain is already seen, thus  $\rvx_{e^+}=\rvx_{e^+\to e}$  i.e., $\lambda_1(e^+) = 0$, NCM objective \eqref{eqn:estimated-NCM} and simple CF pair-matching \eqref{eqn:objective-few-shot-simple}
reduce to empirical risk minimization (ERM). 

\section{Detailed Experiment Setup and Hyperparameters}\label{app-sec:exp} 
We provide a detailed description of our experiments setup and hyperparameter selection. 

\subsection{Detailed Construction of Waterbirds-CFs}\label{app:waterbirds-dataset}
The original Waterbirds dataset \citep{sagawa2019distributionally} combines bird images from the CUB dataset \citep{wah2011caltech} with background images from the Places dataset \citep{zhou2017places}. The task is to classify whether a given image depicts a waterbird ($y=1$) or a landbird $(y=0)$. Waterbirds include seabirds (albatross, auklet, cormorant, frigatebird, fulmar, gull, jaeger, kittiwake, pelican, puffin, and tern) and waterfowl (gadwall, grebe, mallard, merganser, guillemot, and Pacific loon).

In the dataset, the invariant features are represented by the bird segments, while the spurious features are the backgrounds. In the training set, the background is highly correlated with the bird species: $95\%$ of waterbirds appear in a water background (ocean or natural lake), and similarly, $95\%$ of landbirds are shown against a land background (bamboo or broadleaf forest). The remaining $5\%$ consist of counterfactual samples, which are random samples from the majority group. Counterfactual pairs share the same bird segment but differ in the background. The validation and test sets are identical to the original Waterbirds dataset, meaning the conditional distribution of the background given either waterbirds or landbirds is $50\%$.

In summary, the modification made between our Waterbirds-CF dataset and the original waterbirds dataset only pertains to the minority groups in the training set. We randomly sampled $184$ landbirds and $56$ waterbirds from the majority group and replaced the backgrounds of these samples to generate the minority group counterfactuals. See \Cref{fig:Waterbirds-CF} for the illustration. We then applied ERM on both these two datasets to investigate the changing in the training distribution. We include the convergence curve in \Cref{sec:dataset-comparison}.
\begin{figure*}[!tb]
    \centering
    \includegraphics[width=\textwidth]{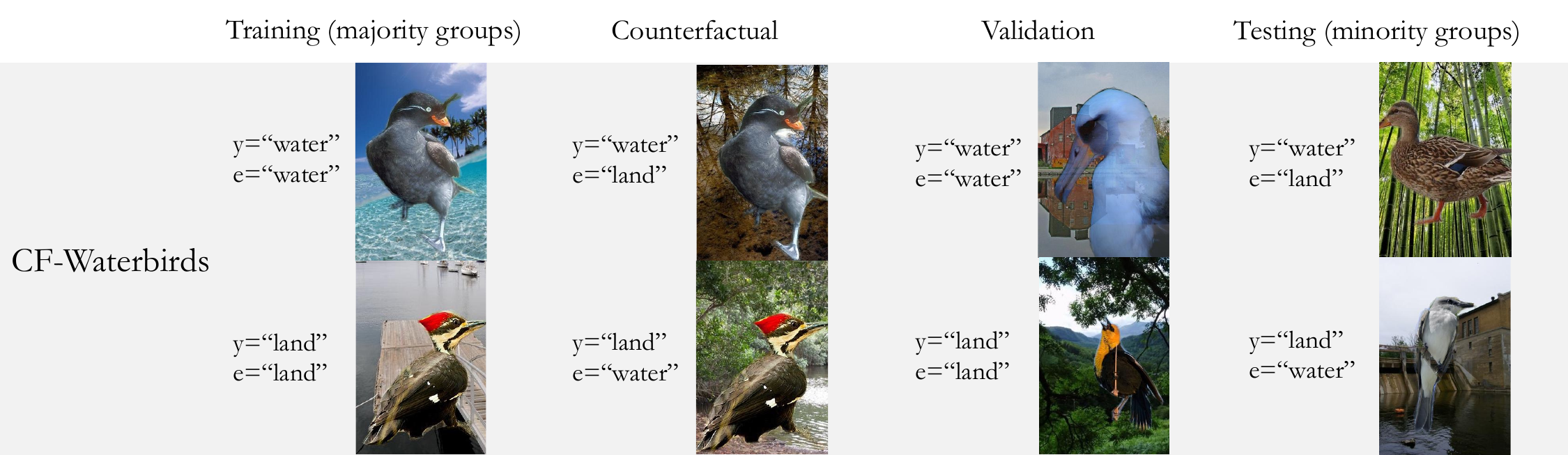}
    \caption{Illustration of the training samples, counterfactual pairs, validation examples, and test examples for the Waterbirds-CF dataset. The training set (majority groups), validation set, and test set are identical to those in the original Waterbirds dataset. Counterfactual pairs feature the same birds from the training set but with different backgrounds.} \label{fig:Waterbirds-CF}
\end{figure*}
\subsection{Hyperparameter Selection}\label{app-sec:hyperparameter}
In this section, we present all the hyperparameters and evaluation used in our experiments to ensure reproducibility.
\paragraph{Synthetic Dataset}
Invariant features are sampled from a standard normal distribution, i.e., $\rvz_\text{inv} \sim \mathcal{N}(0, I)$, The observation function $g_\ry$ is linear, with parameter $\theta_\ry \sim \mathcal{N}(0, \sigma I)$, and the label $\ry = \text{sign} (\rvz_\text{inv}\theta_\ry)$. The spurious features is correlated to the label $\ry$, i.e., $\rvz_\text{spu}\sim\mathcal{N}\left(\frac{\ry}{|\mathcal{I}(\mathcal{F_E})|}, \sigma_s I\right)$ where $\sigma_s$ varies across domains. The observation function $g_x$ is a random orthonormal matrix. The dimension of $\rvz$ and $\rvx$ are both $100,$ i.e., $m=d=100.$

We run 100 iterations of gradient descent using binary cross-entropy loss. We use the Adam optimizer \citep{kingma2014adam} with a learning rate of 0.01 for ERM, IRM, and NCM. The Lagrange multiplier for both IRM and NCM is set to $\lambda=1000$ selected through grid search.

\paragraph{ColoredMNIST}
We use the Adam optimizer with a learning rate of 0.001 and a weight decay of $10^{-4}$. The model is trained with a batch size of 256 for 40 epochs. We tune the hyperparameter r in the range [2, 24] using 256 counterfactual pairs.
\paragraph{Waterbirds-CF}
We use the Adam optimizer with a learning rate of 0.001 and a weight decay of $10^{-4}$. The model is trained with a batch size of 256 for 100 epochs. We tune the hyperparameter r in the range [2, 24].
\paragraph{PACS}
We use the Adam optimizer with a learning rate of 0.01 and a weight decay of $10^{-4}$. The model is trained with a batch size of 256 for 100 epochs. We tune the hyperparameter r in the range [2, 24].

Details on hyperparameter tuning and baseline method selection can be found in the code repository.
\section{Additional Experiments}\label{app-sec:additional-result}

\begin{table}[!ht]

\centering
\captionsetup{position=top}
\captionbox{Main Results on ColoredMNIST\label{tab:MNIST-main}}[0.49\textwidth]{
\includegraphics[width=\linewidth]{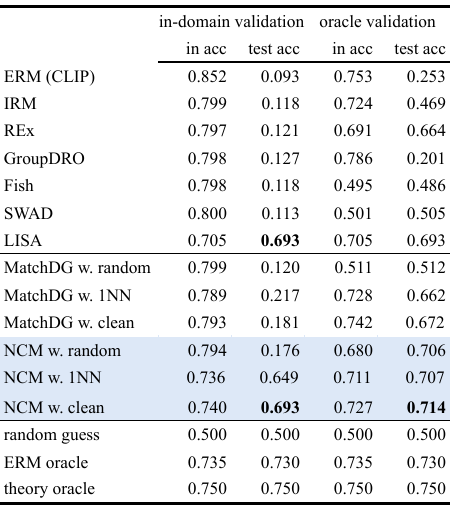}}
\hfill
\captionsetup{position=top}
\captionbox{Main Results on Waterbirds-CF\label{tab:waterbirds-main}}[0.49\textwidth]
{\includegraphics[width=\linewidth]{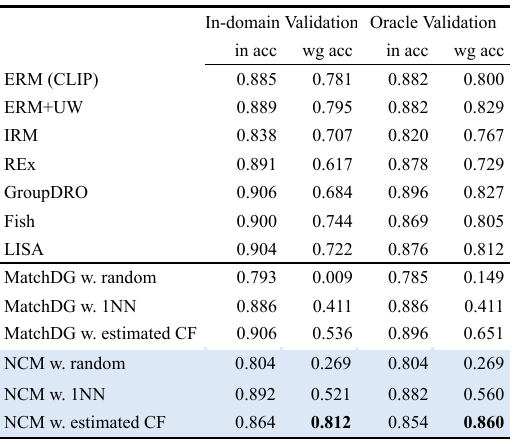}}
\end{table} 
\begin{table*}[!tb]
\caption{Main Results on PACS}\label{tab:PACS-main}
\centering
\includegraphics[width=0.95\linewidth]{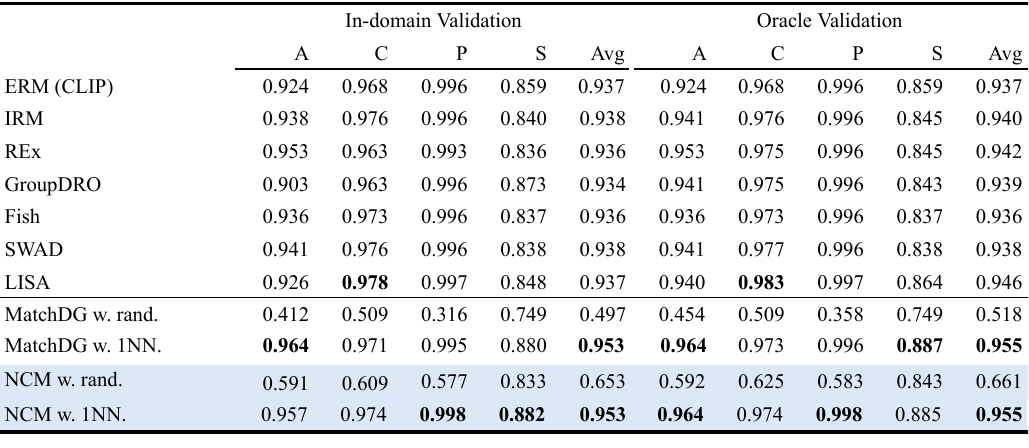}
\end{table*}

In this section, we include more experiments results. We further illustrate the effectiveness of our method, as well as the hyperparameters sensitivity.

\subsection{Ablation Study}
\begin{figure}[!ht]
    \centering
    \captionbox{In-domain test and worst-group accuracy with changing hyperparameter $r.$ In-domain accuracy remains stable for small values of $r$, but starts to drop at $r\approx 128$. Worst-group accuracy first increases then decreases as $r$ grows. \label{fig:hyperparater-r}}[\textwidth]{\includegraphics[width=0.95\linewidth]{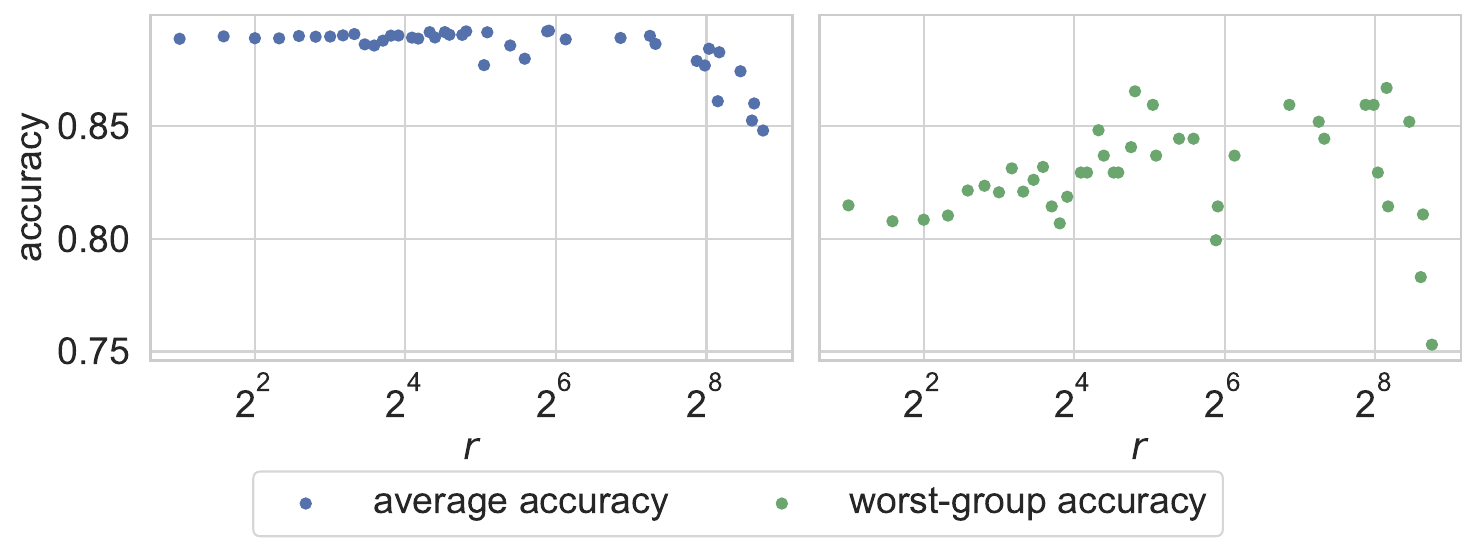}}
    \hfill
    \captionbox{The number of counterfactuals vs. in-test accuracy curve and test accuracy curve on ColoredMNIST using the CLIP + Linear model. We conduct evaluations using 32, 64, 128, 256, and 512 data pairs.\label{fig:num_pairs}}[\textwidth]{\includegraphics[width=\linewidth]{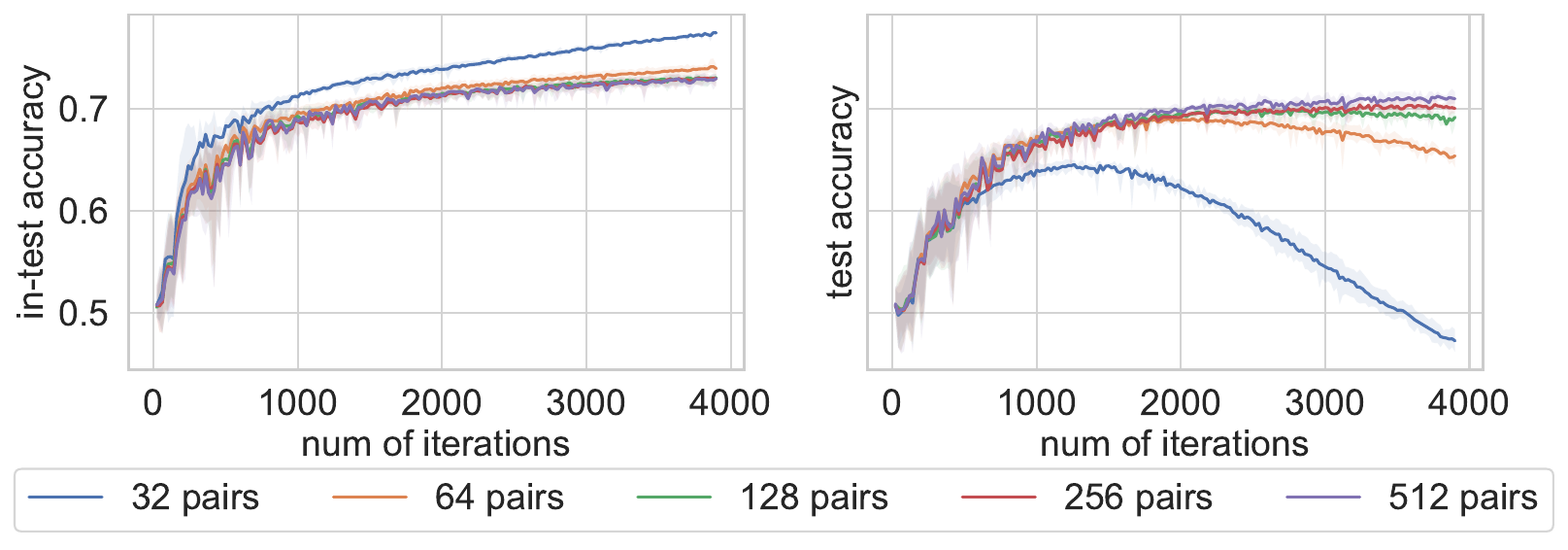}}
\end{figure}
\paragraph{Sensitivity on truncated SVD parameter $r.$}
We empirically evaluate the trade-off effect of the hyperparameter $r$ on model performance during linear probing on the Waterbirds-CF dataset (cf. \Cref{fig:hyperparater-r}), thus validating \Cref{thm:l2_loss} comment {\bf (iii)}: accuracy trade-off induced by $r.$ This pattern reflects the model’s shifting reliance from spurious to invariant features: when $r$ is too small, spurious correlations dominate, resulting in high in-domain but low worst-group performance. As $r$ increases and suppresses these spurious features, worst-group accuracy improves. However, beyond a certain point, further increases in $r$ begin to remove invariant features as well, leading to a decline in both metrics.

\smallskip
{\noindent\bf Sensitivity on the number of CF pairs.}
We evaluate the number of counterfactual pairs needed on ColoredMNIST dataset and report the in-domain test accuracy and test accuracy with 32,64,128,256,512 CF pairs. The results show that with 32 counterfactual pairs, the number of pairs is insufficient for the model to eliminate spurious features, leading to spurious correlations (as indicated by an in-domain accuracy over 75\%, meaning the classification relies on spurious features). However, when using 128 or 256 counterfactual pairs, the performance increases significantly and remains stable compared to the 32 counterfactual pairs. An insufficient number of pairs fails to eliminate the spurious feature, allowing the model to eventually rely on it, which leads to decreased accuracy on the test domain. 
\subsection{Comparison between waterbirds and Waterbirds-CF on ERM.}\label{sec:dataset-comparison}
We run ERM on both waterbirds dataset and our Waterbirds-CF dataset. The results of ERM on both datasets are almost identical (cf. \Cref{figs/waterbirds-compare_avg.pdf}). 
\begin{figure}
    \centering    
    \captionbox{Compare the convergence curve of in-domain average test accuracy as well as the worst-case test accuracy on waterbirds and Waterbirds-CF datasets\label{figs/waterbirds-compare_avg.pdf}}{\includegraphics[width=\linewidth]{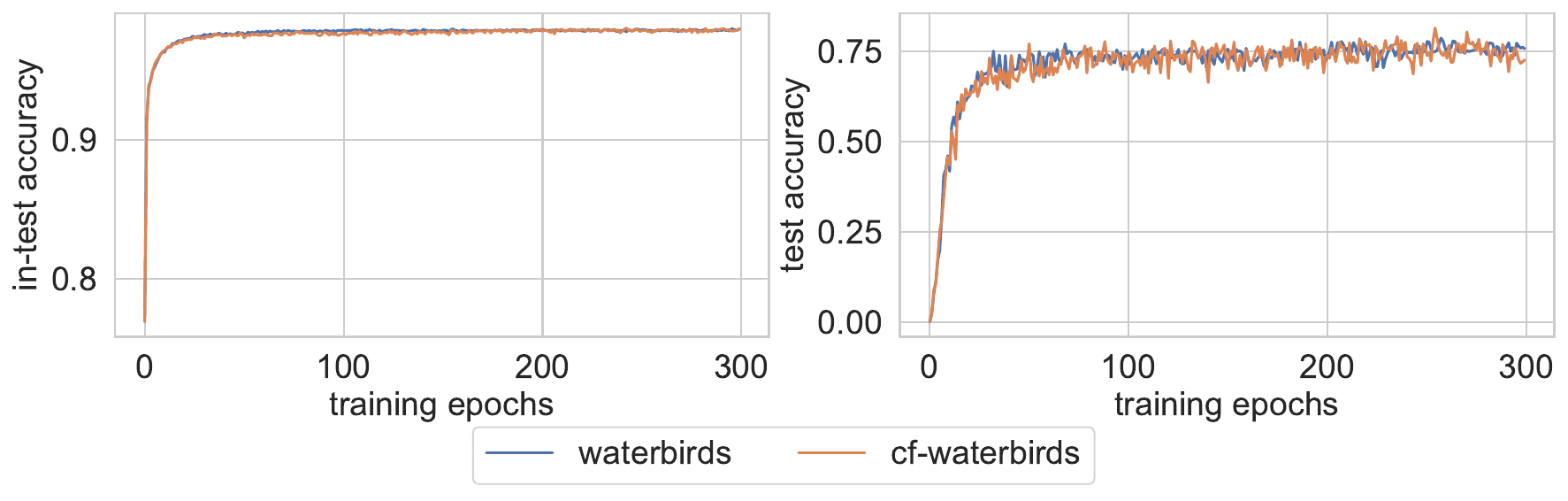}}
    \label{fig:enter-label}
\end{figure}

\subsection{Beyond linearity}
\begin{figure}[!tb]
    \centering
    \includegraphics[width=\linewidth]{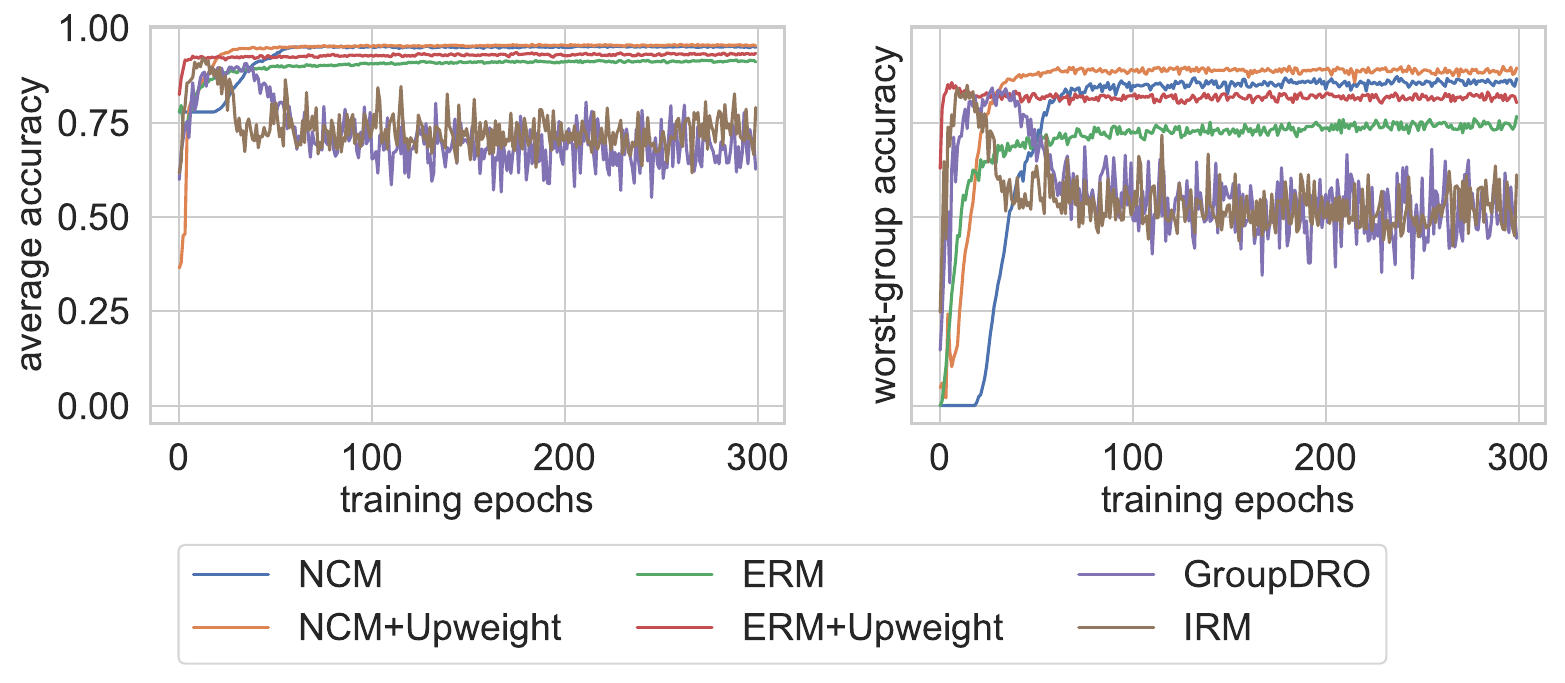}
    \caption{The convergence curve of Waterbirds-CF dataset. It shows that our method is significantly more stable than other DG methods without suffering overfitting.}
    \label{fig:waterbirds-neural}
    \vspace{-1em}
\end{figure}
Though our NCM relies on linear assumption, our method could further work under nonlinear models empirically. In this section, we consider waterbirds-cf dataset using ResNet dataset. We apply mini-batch SGD with $300$ epochs on the pretrained ResNet50 \citep{he2016deep}\footnote{pretrained model is IMAGENET1K\_V1 from torchvision. Download \href{https://download.pytorch.org/models/resnet50-0676ba61.pth}{here}.}. The optimizer used is SGD with a step size of $0.001$, momentum of $0.9$, and weight decay of $0.0001$, as recommended for the Waterbirds dataset. The batch size is set to $128$, For each batch, $128$ counterfactual pairs are sampled to form the constraint term, which these pairs are matched prior to the linear classifier with MSE loss, The latent dimensionality is $64$. The Lagrange multiplier is set to $500$ (and $100$ for IRM). For GroupDRO, we set the learning rate for updating the weight to be $0.01.$ All these hyperparameters are selected through grid search. We report the convergence curve of our methods as well as comparison to other baselines in \Cref{tab:waterbirds-main}. In the table, we report all the methods' best performance on average over $300$ epochs of running. From the result we show that our NCM using only 240 counterfactual pairs, outperforms ERM by 10.5\% on the worst group accuracy. Further, we outperform other baselines like IRM and GroupDRO by 3.0\% and 2.6\%.  Observe that \Cref{fig:waterbirds-neural} shows that NCM is much more stable compared to IRM and GroupDRO. As mentioned that NCM is a causal data-centric approach,
it could be combined with existed method to further improve domain generalization potentially. Here, we combine our method with up-weighting technique and we get 4.4\% improvement over the ERM up-weighting counterpart. We further include experiments on the sensitivity of hyperparameters in \Cref{app-sec:exp}. 
\begin{table}[!tb]
\captionsetup{position=top}
\captionbox{Waterbirds-CF results on ResNet-50: best performance over 300 epochs, averaged across 5 runs. Adjusted accuracy is the reweighted metric to match the training distribution. Avg. Acc and WG. Acc denotes average accuracy and worst-group accuracy respectively.\label{tab:waterbirds-NN}}[\textwidth]
{\includegraphics[width=0.8\linewidth]{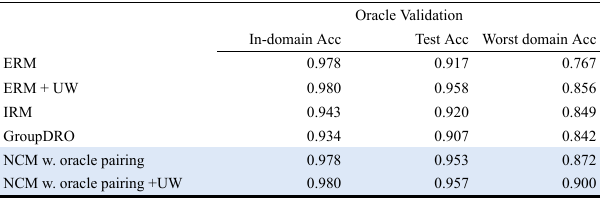}}
\vspace{-1em}
\end{table}
\end{document}